\def\DP{\mbox{DP}}
\def\GaP{\mbox{GaP}}
\def\Dir{\mbox{Dirichlet}}
\def\Gam{\mbox{Gamma}}
\newtheorem{theorem}{Theorem}
\begin{document} 

%\twocolumn[
\title{Exact and Efficient Parallel Inference for Nonparametric Mixture Models}
% It is OKAY to include author information, even for blind
% submissions: the style file will automatically remove it for you
% unless you've provided the [accepted] option to the icml2013
% package.
\author{Sinead A. Williamson \and Avinava Dubey \and Eric P. Xing}
%Machine Learning Department, Carnegie Mellon University, Pittsburgh, PA 15213, USA.}
% You may provide any keywords that you 
% find helpful for describing your paper; these are used to populate 
% the "keywords" metadata in the PDF but will not be shown in the document
%\icmlkeywords{}
\date{}
%\vskip 0.3in
%]
\maketitle
\begin{abstract} 
Nonparametric mixture models based on the Dirichlet process are an elegant alternative to finite models when the number of underlying components is unknown, but inference in such models can be slow. Existing attempts to parallelize inference in such models have relied on introducing approximations, which can lead to inaccuracies in the posterior estimate. In this paper, we describe auxiliary variable representations for the Dirichlet process and the hierarchical Dirichlet process that allow us to sample from the true posterior in a distributed manner. We show that our approach allows scalable inference without the deterioration in estimate quality that accompanies existing methods.
\end{abstract} 
\section{Introduction}

%Bayesian nonparametric models have grown in popularity in Machine Learning, in great part because they sidestep the issue of model selection. In particular, the Dirichlet process (DP) allows us to extend finite mixture models to allow a random number of mixture components, that grows in expectation with the number of data points. The hierarchical Dirichlet process (HDP) extends this concept to grouped data, where every data point is modeled my a mixture over a shared, unbounded set of components.

Models based on the Dirichlet process \citep[DP,][]{Ferguson:1973} and its extension the hierarchical Dirichlet process \citep[HDP,][]{Teh:Jordan:Beal:Blei:2006} have a number of appealing properties. They allow a countably infinite number of mixture components  {\it a priori}, meaning that a finite dataset will be modeled using a finite, but random, number of parameters.  If we observe more data, the model can grow in a consistent manner. Unfortunately, while this means that such models can theoretically cope with data sets of arbitrary size and latent dimensionality, in practice inference can be slow, and the memory requirements are high. 
%\ericx{here parameters are not quite precise.}\ericx{For the two comments I made above, I think there is a confusion on what DP is doing. I think it is generating random partitions on data, so parameters, features are your no-so-precise way of saying it, but I think the most precise way is "partition", "partite", or "clusters".}\ericx{not quite the correct wording. We don't say modeling data using features.}

Parallelization is a technique often used to speed up computation, by splitting the computational and memory requirements of an algorithm onto multiple machines. Parallelization of an algorithm involves exploitation of (conditional) independencies. If we can update one part of a model independently of another part, we can split the corresponding sections of code onto separate processors or cores. Unfortunately, many models do not have appropriate independence structure, making parallelization difficult. For example, in the Chinese restaurant process representation of a Dirichlet process mixture model, the conditional distribution over the cluster allocation of a single data point depends on the allocations of \emph{all} the other data points.

In such cases, a typical approach is to apply approximations that break some of the long-range dependencies. While this can allow us to parallelize inference in the approximate model, the posterior estimate will, in general, be less accurate. Another option is to use a sequential Monte Carlo approach, where the posterior is approximated with a swarm of independent particles. In its simplest form, this approach is inherently parallelizable, but such a naive implementation will run into problems of variance overestimation. We can improve the estimate quality by introducing global steps such as particle resampling and Gibbs steps, but these steps cannot easily be parallelized.

In this paper, we show how the introduction of auxiliary variables into the DP and HDP can create the conditional independence structure required to obtain a parallel Gibbs sampler, without introducing approximations. As a result, we can make use of the powerful and elegant representations provided by the DP and HDP, while maintaining manageable computational requirements. We show that the resulting samplers are able to achieve significant speed-ups over existing inference schemes for the ``exact'' models, with no loss in quality. By performing inference in the true model, we are able to achieve better results than those obtained using approximate models.

%\section{Existing parallel inference schemes for Dirichle processes and hierarchical Dirichlet processes}
\section{Background}

The Dirichlet process  is a distribution over discrete probability measures $D=\sum_{k=1}^\infty \pi_k\delta_{\theta_k}$ with countably infinite support, where the finite-dimensional marginals are distributed according to a finite Dirichlet distribution. It is parametrized by a base probability measure $H$, which determines the distribution of the atom locations, and a concentration parameter $\alpha>0$, which acts like an inverse variance. The DP can be used as the distribution over mixing measures in a nonparametric mixture model. In the Dirichlet process mixture model \citep[DPMM,][]{Antoniak:1974}, data $\{x_i\}_{i=1}^n$ are assumed to be generated according to
\begin{equation}
\begin{array}{ccccc}
D\sim\DP(\alpha, H)\,,& &
\theta_i \sim D\,, & &
x_i\sim f(\theta_i)\,.
\end{array}\label{eq:DP}
\end{equation}
While the DP allows an infinite number of clusters {\it a priori}, any finite dataset will be modeled using a finite, but random, number of clusters.

Hierarchical Dirichlet processes extend the DP to model grouped data. The HDP is a distribution over probability distributions $D_m, m=1,\dots,M$, each of which is marginally distributed according to a DP. These distributions are coupled using a discrete common base-measure, itself distributed according to a DP. Each distribution $D_m$ can be used to model a collection of observations $\mathbf{x}_m := \{x_{mi}\}_{i=1}^{N_m}$. 
\begin{equation}
\begin{array}{cc}
D_0\sim\DP(\alpha, H)&
D_m\sim\DP(\gamma,D_0)\\
\theta_{mi} \sim D_m &
x_{mi}\sim f(\theta_{mi})\,
\end{array}\label{eq:HDP}
\end{equation}
for $m=1,\dots, M$ and $i=1,\dots, N_m$. HDPs have been used to model data including text corpora \cite{Teh:Jordan:Beal:Blei:2006}, images \cite{Sudderth:Torralba:Freeman:Willsky:2005}, time series data \cite{Fox:Sudderth:Jordan:Willsky:2008}, and genetic variation \cite{Sohn:Xing:2009}.

%HDPs have been used to model data including text corpora \cite{Teh:Jordan:Beal:Blei:2006} (where each collection $\mathbf{x}_m$ is a document and each observation $x_{mi}$ is a word), images \cite{Sudderth:Torralba:Freeman:Willsky:2005} (where each collection $\mathbf{x}_m$ is an image and each observation $x_{mi}$ is a subsection of that image), and genetic variation \cite{Sohn:Xing:2009} (where each collection $\mathbf{x}_m$ is a population and each observation $x_{mi}$ a haplotype).

A number of inference schemes have been developed for the DP and the HDP. The most commonly used class of inference methods is based on the Chinese restaurant process (CRP, see for example \citet{Aldous:1983}). Such schemes integrate out the random measures ($D$ in Eq.~\ref{eq:DP}; $D_0$ and $\{D_m\}_{m=1}^M$ in Eq.~\ref{eq:HDP}) to obtain the conditional distribution for the cluster allocation of a single data point, conditioned on the allocations of all the other data points. A variety of Gibbs samplers based on the CRP can be constructed for the DP \cite{Neal:1998} and the HDP \cite{Teh:Jordan:Beal:Blei:2006}.
Unfortunately, because the conditional distribution for the cluster allocation of a single data point depends on all the data, this step cannot be parallelized without introducing approximations.

An alternative class of inference schemes involve explicitly instantiating the random measures \cite{Ishwaran:James:2001}. In this case, the observations are i.i.d. given the random measures, and can be sampled in parallel. However, since the random measure depends on the cluster allocations of all the data points, sampling the random measure cannot be parallelized.

Among existing practical schemes for parallelizable inference in DP and HDP, the following three are the most popular:
%\ericx{Sinead, please go through this section carefully as I don't think the writing is very rigorous.}

%\ericx{Also, I don't think the background should really focus on model, but should be on inference in DP, as this is the subject of this paper.}

\subsection{Sequential Monte Carlo methods}
Sequential Monte Carlo (SMC) methods approximate a distribution of interest using a swarm of weighted, sequentially updated, particles. SMC methods have been used to perform inference in a number of models based on the DP \cite{Fearnhead:2004,Ulker:Gunsel:Cemgil:2010,Rodriguez:2011,Ahmed:Ho:Teo:Eisenstein:Smola:Xing:2011}. Such methods are appealing when considering parallelization, because each particle (and its weight) are updated independently of the other particles, and need consider only one data point at a time. However, a naive implementation where each particle is propagated in isolation leads to very high variance in the resulting estimate. To avoid this, it is typical to introduce \emph{resampling} steps, where the current swarm of particles is replaced by a new swarm sampled from the current posterior estimate. This avoids an explosion in the variance of our estimate, but the resampling cannot be performed in parallel. %Another problem often encountered in SMC models is particle degeneracy -- resampling steps mean that our estimate can consist of very few unique particle paths, especially in describing the first observations. A standard method to resolve this problem is to add ``jitter'' to the particles using MCMC steps. However, these MCMC steps are not easily parallelizable.

\subsection{Variational inference}
Variational Bayesian inference algorithms have been developed for both the DP \cite{Blei:Jordan:2004, Kurihara:Welling:Teh:2007} and the HDP \cite{Teh:Kurihari:Welling:2007, Wang:Paisley:Blei:2011}. Variational methods approximate a posterior distribution $p(\theta|X)$ with a distribution $q(\theta)$ belonging to a more manageable family of distributions and try to find the ``best'' member of this family, typically by minimizing the Kullback-Leibler divergence between $p(\theta|X)$ and $q(\theta)$. This also gives us a lower bound on the log likelihood, $\log p(X)$. A typical approach to selecting the family of approximating distributions is to assume independencies that may not be present in the true posterior. This means that variational algorithms are often easy to parallelize. However, by searching only within a restricted class of models we lose some of the expressiveness of the model, and will typically obtain less accurate results than MCMC methods that sample from the true posterior.

\subsection{Approximate parallel Gibbs sampling}
An approximate distributed Gibbs sampler for the HDP is described by \citet{Asuncion:Smyth:Welling:2008}. The basic sampler alternates distributed Gibbs steps with a global synchronization step. If we have $P$ processors, in the distributed Gibbs steps, each processor updates $1/P$ of the topic allocations. To sample the topic allocation for a given token, rather than use the current topic allocations of \emph{all} the other tokens, we use the current topic allocations for the tokens on the same processor, and for all other tokens we use the allocations after the last synchronization step. In the synchronization step, we amalgamate the topic counts. This can run into problems with topic alignment. In particular, there is no clear way to decide whether to merge a new topic on processor $p$ with a new topic on processor $p'$. An asynchronous version of the algorithm avoids the bottleneck of a global synchronization step.

\section{Introducing auxiliary variables to obtain conditional independence}
The key to developing a parallel inference algorithm is to exploit or introduce independencies. In the sequential Monte Carlo samplers, this independence can lead to high variance in the resulting estimate. In the other algorithms described, the independence is only achieved by introducing approximations.

If observations are modeled using a mixture model, then conditioned on the cluster allocations the observations are independent. The key idea that allows us to introduce conditional independence is the fact that, for appropriate parameter settings, \emph{Dirichlet mixtures of Dirichlet processes are Dirichlet processes}. In Theorems~\ref{thm:DP} and \ref{thm:HDP}, we demonstrate situations where this result holds, and develop mixtures of nonparametric models with the appropriate marginal distributions and conditional independence structures. The resulting models exhibit conditional independence between parameters that are coupled in Eq.~\ref{eq:DP} and Eq.~\ref{eq:HDP}, allowing us to perform parallel inference in Section~\ref{sec:inference} without resorting to approximations.

\begin{theorem}[Auxiliary variable representation for the DPMM]
%If $\phi\sim\Dir(\alpha_1,...,\alpha_K)$ and $D_k \sim \DP(\alpha_k, H_k)$, then $D := \sum_k\phi_kD_k\sim \DP(\sum_k\alpha_k H_k)$. Therefore, w
We can re-write the generative process for a DPMM (given in Eq.~\ref{eq:DP}) as
\begin{equation}
\begin{split}
D_j\sim\DP\left(\frac{\alpha}{P}, H\right), \;\;\;\phi \sim \Dir\left(\frac{\alpha}{P},\dots,\frac{\alpha}{P}\right),\\
\pi_i\sim\phi,\qquad
\theta_i \sim D_{\pi_i}, \qquad
x_i\sim f(\theta_i),\qquad
\end{split}
\label{eq:DPaux}
\end{equation}
for $j=1,\dots,P$ and $i=1,\dots,N$.
The marginal distribution over the $x_i$ remains the same.\label{thm:DP}
\end{theorem}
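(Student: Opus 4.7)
The main content of the statement is that the random measure $D := \sum_{j=1}^{P}\phi_j D_j$ induced by the construction in Eq.~\ref{eq:DPaux} is marginally $\DP(\alpha,H)$. Once this is established, the rest of the theorem is immediate: the hierarchical draw ``$\pi_i\sim\phi$, then $\theta_i\sim D_{\pi_i}$'' is, by the tower property, equivalent to $\theta_i\mid\{D_j\},\phi\sim\sum_j\phi_j D_j=D$, so the conditional distribution of each $\theta_i$ given $D$ is exactly $D$, and the $x_i\sim f(\theta_i)$ layer is untouched. Marginalizing out $\{D_j\}$ and $\phi$ then reproduces the generative model in Eq.~\ref{eq:DP}.

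So the work is to show $D\sim\DP(\alpha,H)$ with the correct joint independence structure. The cleanest route, and the one I would take, is via the Gamma process (normalized completely random measure) construction. Let $\mathcal G_j\sim\GaP(\alpha/P,H)$ be independent Gamma random measures, $j=1,\ldots,P$. Two standard facts I would invoke: (i) the total mass $T_j:=\mathcal G_j(\Omega)$ is $\Gam(\alpha/P,1)$-distributed and is independent of the normalized measure $\tilde D_j:=\mathcal G_j/T_j$, which is $\DP(\alpha/P,H)$; (ii) independent Gamma processes superpose, so $\mathcal G:=\sum_j\mathcal G_j\sim\GaP(\alpha,H)$, and hence $\mathcal G/\mathcal G(\Omega)\sim\DP(\alpha,H)$.

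Now I would simply rewrite the normalized superposition as a convex combination:
\begin{equation*}
\frac{\mathcal G}{\mathcal G(\Omega)}
=\sum_{j=1}^{P}\frac{T_j}{\sum_{k}T_k}\,\tilde D_j
=\sum_{j=1}^{P}\phi_j\,\tilde D_j,
\end{equation*}
where $\phi:=(T_1,\ldots,T_P)/\sum_k T_k\sim\Dir(\alpha/P,\ldots,\alpha/P)$ by the standard Gamma-to-Dirichlet identity. Because the $T_j$ are jointly independent of the $\tilde D_j$ (fact (i) applied componentwise), $\phi$ is independent of $(\tilde D_1,\ldots,\tilde D_P)$. Identifying $\tilde D_j$ with $D_j$ shows that the $(D_j,\phi)$ jointly produced by the Gamma-process construction have exactly the joint law prescribed in Eq.~\ref{eq:DPaux}, and that the induced mixture $\sum_j\phi_jD_j$ is $\DP(\alpha,H)$.

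The main obstacle is really conceptual rather than computational: one needs the independence of $\phi$ from the component DPs $\{D_j\}$ to conclude that the Dirichlet mixture is a Dirichlet process, and this independence is not obvious from the symbolic form of Eq.~\ref{eq:DPaux}. The Gamma process derivation delivers it for free, since the ratios $T_j/\sum_kT_k$ depend only on the total masses while the $\tilde D_j$ depend only on the within-component normalizations. An alternative I would mention as a sanity check is to verify the statement via Ferguson's finite-partition characterization: for any measurable partition $A_1,\ldots,A_K$ of $\Omega$, both $(D_j(A_1),\ldots,D_j(A_K))$ and $\phi$ are Dirichlet-distributed, and the resulting Dirichlet-weighted mixture of independent Dirichlets with matching concentration $\alpha/P$ is itself $\Dir(\alpha H(A_1),\ldots,\alpha H(A_K))$ by the same Gamma-representation trick applied in finite dimensions.
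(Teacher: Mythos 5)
Your proposal is correct and follows essentially the same route as the paper's proof: representing each $D_j$ as a normalized gamma process, invoking superposition of completely random measures, and identifying the normalized total masses with a $\Dir(\alpha/P,\dots,\alpha/P)$ vector. If anything, you are slightly more careful than the paper in making explicit the independence of $\phi$ from the normalized components $\{D_j\}$, which is indeed the point the symbolic statement hides; the paper relegates an alternative predictive-distribution (Chinese restaurant) verification to its appendix, similar in spirit to the finite-partition sanity check you mention.
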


\begin{proof}
We prove a more general result, that if $\phi\sim\Dir(\alpha_1,...,\alpha_P)$ and $D_j \sim \DP(\alpha_j, H_j)$, then $D := \sum_j\phi_jD_j\sim \DP(\sum_j\alpha_j,  \frac{\sum_j\alpha_jH_j}{\sum_j\alpha_j})$. 

A Dirichlet process with concentration parameter $\alpha$ and base probability measure $H$ can be obtained by normalizing a gamma process with base measure $\alpha H$. Gamma processes are examples of completely random measures \cite{Kingman:1967}, and it is well known that the superposition of $P$ completely random measures is again a completely random measure. In particular, if $G_j \sim \GaP(\alpha_j H_j)$, $j=1,\dots, P$, then $G := \sum_j G_j \sim \GaP(\sum_j\alpha_jH_j)$. 

Note that the total masses of the $G_j$ are gamma-distributed, and therefore the vector of normalized masses is Dirichlet-distributed. It follows that, if $\phi\sim\Dir(\alpha_1,...,\alpha_P)$ and $D_j \sim \DP(\alpha_j, H_j)$, then $D := \sum_j\phi_jD_j\sim \DP(\sum_j\alpha_j,\frac{\sum_j\alpha_j H_k}{\sum_j\alpha_j})$. This result, which is well known in the nonparametric Bayes community, is explored in more depth in Chapter 3 of \citet{Ghosh:Ramamoorth:2003}. An alternative proof is given in Appendix~\ref{app:proofs}.%the supplementary material.
\end{proof}

The auxiliary variables $\pi_i$ introduced in Eq.~\ref{eq:DPaux} introduce conditional independence, which we exploit to develop a distributed inference scheme. If we have $P$ processors, then we can split our data among them according to the $\pi_i$. Conditioned on their processor allocations, the data points are distributed according to independent Dirichlet processes. In Section~\ref{sec:inference}, we will see how we can combine local inference in these independent Dirichlet processes with global steps to move data between processors.

We can follow a similar approach with the HDP, assigning each token in each data point to one of $P$ groups corresponding to $P$ processors. However, to ensure the higher level DP can be split in a manner consistent with the lower level DP, we must impose some additional structure into the generative process described by Eq.~\ref{eq:HDP} -- specifically, that $\gamma \sim \Gam(\alpha)$. We can then introduce auxiliary variables as follows:

\begin{theorem}[Auxiliary variable representation for the HDP]
If we incorporate the requirement that the concentration parameter $\gamma$ for the bottom level DPs $\{D_j\}_{j=1}^M$ depends on the concentration parameter $\alpha$ for the top level DP $D_0$ as $\gamma\sim\Gam(\alpha)$, then we can rewrite the generative process for the HDP as:
%If we incorporate the requirement that $\gamma\sim\Gam(\alpha)$, then we can rewrite the generative process for the HDP (Eq.~\ref{eq:HDP}) as:
%\begin{equation}
%\begin{split}
%\zeta_j &\sim \Gam(\alpha / P),\;\;\; j=1,\dots,P\\
%D_{0j} &\sim \DP(\alpha/P),\;\;\; j=1,\dots,P\\
%\nu_m &\sim \Dir(\zeta_1,\dots,\zeta_P)\\
%D_{mj} &\sim \DP(\zeta_j,D_{0j}),\;\;\; m = 1,\dots, M,\;\; j=1,\dots,P\\
%\pi_{mi} &\sim \nu_m,\;\;\; m=1,\dots,M,\;\; i=1,\dots,N_m\\
%\theta_{mi} &\sim D_{m\pi_{mi}}\\
%x_{mi}&\sim f(\theta_{mi})\label{eq:HDPaux}
%\end{split}
%\end{equation}
\begin{equation}
\begin{alignedat}{4}
\zeta_j &\sim \Gam(\alpha / P) &\qquad \qquad &\pi_{mi}\sim& \nu_m \quad \;\;\\
D_{0j} &\sim \DP(\alpha/P, H) &\qquad \qquad &\theta_{mi} \sim& D_{m\pi_{mi}}\\
\nu_m &\sim \Dir(\zeta_1,\dots,\zeta_P) &\qquad \qquad &x_{mi}\sim& f(\theta_{mi})\\
D_{mj} &\sim \DP(\zeta_j,D_{0j}), & & &
\label{eq:HDPaux}
\end{alignedat}
\end{equation}
for $j=1,\dots,P$, $m=1,\dots,M$, and $i= 1,\dots,N_m$. The marginal distribution over the $\mathbf{x}_{mi}$ is the same in Eq.s~\ref{eq:HDP} and \ref{eq:HDPaux}.\label{thm:HDP}
\end{theorem}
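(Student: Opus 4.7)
The plan is to apply Theorem~\ref{thm:DP} (in the general form established in its proof) twice, once at each level of the hierarchy. The key reparametrization is to set $\gamma := \sum_{j=1}^{P}\zeta_j$ and $\phi_j := \zeta_j/\gamma$. Since the $\zeta_j$ are i.i.d.\ $\Gam(\alpha/P)$, standard gamma-distribution facts give $\gamma\sim\Gam(\alpha)$, $(\phi_1,\dots,\phi_P)\sim\Dir(\alpha/P,\dots,\alpha/P)$, and $\gamma$ is independent of $\phi$. This supplies both the implicit weights needed for the top-level mixture and the correct marginal law of $\gamma$, explaining why the stipulation $\gamma\sim\Gam(\alpha)$ is required.

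At the top level, I invoke Theorem~\ref{thm:DP} with $\alpha_j=\alpha/P$ and $H_j=H$: since the $D_{0j}\sim\DP(\alpha/P,H)$ are independent, $D_0 := \sum_j \phi_j D_{0j} \sim \DP(\alpha, H)$, matching the top level of Eq.~\ref{eq:HDP}. At the bottom level, for each $m$ I apply Theorem~\ref{thm:DP} conditional on $(\zeta_j, D_{0j})_{j=1}^P$, taking $\alpha_j = \zeta_j$ and $H_j = D_{0j}$: the $D_{mj}\sim\DP(\zeta_j,D_{0j})$ are conditionally independent across $j$, and $\nu_m\sim\Dir(\zeta_1,\dots,\zeta_P)$ supplies the required Dirichlet weights, so $D_m := \sum_j \nu_{mj} D_{mj} \sim \DP\bigl(\textstyle\sum_j\zeta_j,\, \sum_j\phi_j D_{0j}\bigr) = \DP(\gamma,D_0)$. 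Because $(\nu_m,\{D_{mj}\}_j)$ is drawn independently for each $m$ while the $(\zeta_j, D_{0j})$ are shared, the $D_m$ are conditionally independent given $D_0$ and $\gamma$, exactly as in Eq.~\ref{eq:HDP}.

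To finish, drawing $\pi_{mi}\sim\nu_m$ followed by $\theta_{mi}\sim D_{m\pi_{mi}}$ is the standard mixture construction of drawing $\theta_{mi}\sim\sum_j\nu_{mj}D_{mj}=D_m$, so marginalizing out the $\pi_{mi}$ reproduces the original sampling step $\theta_{mi}\sim D_m$ and therefore the marginal law of the $x_{mi}$. The main subtlety I expect to have to handle is the book-keeping at the bottom level: the base measures $D_{0j}$ are themselves random and distinct across $j$, so the general form of Theorem~\ref{thm:DP} (with varying $H_j$) is essential, and one must verify that the Dirichlet weights implicit in $\nu_m$ and the normalization defining $\phi$ both arise from the \emph{same} $\zeta_j$, so that the recovered base measure of $D_m$ is exactly $D_0=\sum_j\phi_j D_{0j}$ rather than some other mixture of the $D_{0j}$.
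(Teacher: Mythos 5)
Your proposal is correct and follows essentially the same route as the paper: the paper's proof also reduces to the "Dirichlet mixture of DPs is a DP" fact at both levels, establishing it via gamma-process superposition and normalization (introducing $G_{0j}=\zeta_j D_{0j}$ and $G_{mj}=\eta_{mj}D_{mj}$ with $\eta_{mj}\sim\Gam(\zeta_j)$, whose normalized masses give exactly your $\nu_m\sim\Dir(\zeta_1,\dots,\zeta_P)$). Your observation that the same $\zeta_j$ must supply both the weights defining $D_0$ and the Dirichlet parameters of $\nu_m$ is precisely the point the paper's construction is built around, so no gap here.
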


\begin{proof}
If $\zeta_j \sim \Gam(\alpha/P)$, then $\gamma:=\sum_j\zeta_j \sim \Gam(\alpha)$. Since $D_{0j} \sim\DP(\alpha/P,H)$, it follows that $G_{0j}:=\zeta_jD_{0j} \sim \GaP\left(\frac{\alpha}{P}H\right)$ and
$G_0:=\sum_jG_{0j} \sim \GaP(\alpha H)$. If we normalize $G_0$, we find that $D_0:=\sum_j\frac{\zeta_j}{\gamma}D_{0j} \sim \DP(\alpha,H)$, as required by the HDP.

If we write $G_{mj}\sim \GaP(G_{0j})=\GaP(\zeta_j D_{0j})$, then we can see that $G_{mj} = \eta_{mj}D_{mj}$, where $\eta_{mj}\sim \Gam(\zeta_j)$, and $D_{mj}\sim\DP(\zeta_j, D_{0j})$.

If we normalize the $G_{mj}$, we find that 
\begin{equation*}
\begin{split}
D_m:= &\sum_j\frac{\eta_{mj}}{\sum_k\eta_{mk}}D_{mj}\\
 \sim&\DP\bigg(\sum_j\zeta_j,\frac{\sum_j\zeta_jD_{0j}}{\sum_j\zeta_j}\bigg)= \DP(\gamma, D_{0}),
\end{split}
\end{equation*}
as required by the HDP. Since the $\eta_{mj}$ only appear as a normalized vector, we can write $\nu_m = (\eta_{m1},\dots,\eta_{mP})/\sum_{j=1}^P \eta_{mj} \sim \Dir(\zeta_1,\dots,\zeta_P)$.

A more in-depth version of this proof is given in Appendix~\ref{app:proofs}.%the supplementary material.
\end{proof}

Again, the application to distributed inference is clear: Conditioned on the $\pi_{mi}$ we can split our data into $P$ independent HDPs. 

\section{Inference}\label{sec:inference}
The auxiliary variable representation for the DP introduced in Theorem~\ref{thm:DP}  makes the cluster allocations for data points where $\pi_i = j$ conditionally independent of the cluster allocations for data points where $\pi_i \neq j$. We can therefore split the data onto $P$ parallel processors or cores, based on the values of $\pi_i$. We will henceforth call $\pi_i$ the ``processor indicator'' for the $i$th data point. We can Gibbs sample the cluster allocations on each processor independently, intermittently moving data between clusters to ensure mixing of the sampler.

\subsection{Parallel inference in the Dirichlet process}\label{sec:DPinf}
We consider first the Dirichlet process. Under the construction described in Eq.~\ref{eq:DPaux}, each data point $x_i$ is associated with a processor indicator $\pi_i$ and a cluster indicator $z_i$. All data points associated with a single cluster will have the same processor indicator, meaning that we can assign each cluster to one of the $P$ processors (i.e., all data points in a single cluster are assigned to the same processor). Note that the $j$th processor will typically be associated with multiple clusters, corresponding to the local Dirichlet process $D_j$.  Conditioned on the assignments of the auxiliary variables $\pi_i$, the data points $x_i$ in Eq.~\ref{eq:DPaux} depend only on the local Dirichlet process $D_j$ and the associated parameters. 

We can easily marginalize out the $D_j$ and $\phi$. Assume that each data point $x_i$ is assigned to a processor $\pi_i\in\{1,\dots,P\}$, and a cluster $z_i$ residing on that processor (i.e., all data points in a single cluster are assigned to the same processor). We will perform \emph{local} inference on the cluster assignments $z_i$, and intermittently we will perform \emph{global} inference on the $\pi_i$.
\subsubsection{Local inference: Sampling the $z_i$}
Conditioned on the processor assignments, sampling the cluster assignments proceeds exactly as in a normal Dirichlet process with concentration parameter $\alpha / P$. A number of approaches for Gibbs sampling in the DPMM are described by \citet{Neal:1998}. %\swcomment{What do we use?}
\subsubsection{Global inference: Sampling the $\pi_i$}
Under the auxiliary variable scheme, each cluster is associated with a single processor. We jointly resample the processor allocations of all data points within a given cluster, allowing us to move an entire cluster from one processor to another. We use a Metropolis Hastings step with a proposal distribution independently assigns cluster $k$ to processor $j$ with probability $1/P$. This means our accept/reject probability depends only on the ratio of the likelihoods of the two assignments. %\swcomment{This is a rather naive proposal density -- it might be worth proposing a split based on the similarities of the clusters, e.g. with probability $\rho$, sample at random, or with probability $1-\rho$, use some deterministic clustering algorithm.}

The likelihood ratio is given by:
\begin{equation}
\begin{split}
\frac{p(\{\pi_i^*\})}{p(\{\pi_i\})} =& \frac{p(\{x_i\}|\pi_i^*) p(\{\pi_i^*\}|\alpha,P)}{p(\{x_i\}|\pi_i) p(\{\pi_i\}|\alpha,P)} \\
=&\frac{p(\{z_i\}|\pi_i^*) p(\{\pi_i^*\}|\alpha,P)}{p(\{z_i\}|\pi_i) p(\{\pi_i\}|\alpha,P)}\\
=& \prod_{j=1}^{P}\prod_{i=1}^{\max(N_j,N_j^*)}\frac{a_{ij}!}{a_{ij}^*!},\label{eq:lhood}
\end{split}
\end{equation}
where $N_j$ is the number of data points on processor $j$, and $a_{ij}$ is the number of clusters of size $i$ on processor $j$. In fact, we can simplify Eq.~\ref{eq:lhood} further, since many of the terms in the ratio of factorials will cancel. A derivation of Eq.~\ref{eq:lhood} is given in Appendix~\ref{app:MHDP}.%the supplementary material.

The reassignment of clusters can be implemented in a number of different manners. Actually transferring data from one processor to another will lead to bottlenecks, but may be appropriate if the entire data set is too large to be stored in memory on a single machine. If we can store a copy of the dataset on each machine, or we are using multiple cores on a single machine, we can simply transfer updates to lists of which data points belong to which cluster on which machine. We note that the reassignments need not occur at the same time, reducing the bandwidth required.

%The standard Gibbs sampling steps, where a single data point's allocation is sampled conditioned on the other data points' allocations, can lead to poor mixing. This is particularly important in this model, where after each processor reallocation, we begin with a pre-defined clustering. Therefore, we can add in some Metropolis-Hastings proposals \cite{Jain:Neal:2000,Dahl:2003} to allow larger changes in the cluster assignments. \swcomment{We don't currently have this -- but hopefully we will by NIPS. There are two forms of split-merge Metropolis-Hastings moves commonly used in the literature -- the Restricted Gibbs Split/Merge approach of \cite{Jain:Neal:2000}, and the Sequentially Allocated Split/Merge approach of \cite{Dahl:2003}. I am leaning towards the Dahl method, because 1. it's simpler, and 2. there's an HDP version by Wang and Blei.}

\subsection{Parallel inference in the HDP}

Again, we can assign tokens $x_{mi}$ to one of $P$ processors according to $\pi_{mi}$. Conditioned on the processor assignment and the values of $\zeta_j$, the data on each processor is distributed according to a HDP. We instantiate the processor allocations $\pi_{mi}$ and the bottom-level DP parameters, plus sufficient representation to perform inference in the processor-specific HDPs. 
\subsubsection{Local inference: Sampling the table and dish allocations}
Conditioned on the processor assignments, we simply have $P$ independent HDPs, and can use any existing inference algorithm for the HDP. In our experiments, we used the Chinese restaurant franchise sampling scheme \cite{Teh:Jordan:Beal:Blei:2006}.
\subsubsection{Global inference: Sampling the $\pi_{mj}$ and the $\zeta_j$}
We can represent the $\zeta_j$ as $\zeta_j:=\gamma \xi_j$, where $\gamma \sim \Gam(\alpha,1)$ and $\boldsymbol{\xi}:=(\xi_1,\dots,\xi_P)\sim \Dir(\alpha / P,\dots,\alpha / P)$. We sample the $\pi_{mj}$ and the $\xi_j$ jointly, and then sample $\gamma$, in order to improve the acceptance ratio of our Metropolis-Hastings steps.%\swcomment{actually, we can try both ways}

Again, we want to reallocate whole clusters rather than independently reallocate individual tokens. So, our proposal distribution again assigns cluster $k$ to processor $j$ with probability $1/P$. Note that this means that a single data point does not necessarily reside on a single processor -- its tokens may be split among multiple processors. We also propose $\boldsymbol{\xi}^*\sim\Dir(\alpha / P,\dots,\alpha / P)$, and accept the resulting state with probability $\min(1, r)$, where
\begin{equation}
\begin{split}
r&=\frac{p(\{x_{mi}\}|\{\pi_{mi}^*\},\gamma,\boldsymbol{\xi}^*,\alpha,P)}{p(\{x_{mi}\}|\{\pi_{mi}\},\gamma,\boldsymbol{\xi},\alpha,P)}\frac{p(\{\pi_{mi}^*\}|\gamma,\boldsymbol{\xi}^*)}{p(\{\pi_{mi}\}|\gamma,\boldsymbol{\xi}^)}\frac{p(\boldsymbol{\xi}^*|\alpha,P)}{p(\boldsymbol{\xi}|\alpha,P)}\\
&=\prod_{j=1}^P\frac{(\xi_j^*)^{T_{\cdot j}^* + \alpha / P}}{(\xi_j)^{T_{\cdot j} + \alpha / P}}\frac{T_{\cdot j}^*!}{T_{\cdot j}!}\frac{\Gamma(\alpha / P +T_{\cdot j})}{\Gamma(\alpha / P +T_{\cdot j}^*)}\prod_{i=1}^{n_{\cdots}}\frac{b_{ji}!}{b_{ji}^*!}\prod_{m=1}^M\frac{a_{jmi}!}{a_{jmi}^*!}.
\end{split}\label{eq:lhoodHDP}
\end{equation}
%\begin{equation}
%\begin{split}
%\min\bigg(1,& \frac{p(\{x_{mi}\}|\{\pi_{mi}^*\},\gamma,\boldsymbol{\xi}^*,\alpha,P)}{p(\{x_{mi}\}|\{\pi_{mi}\},\gamma,\boldsymbol{\xi},\alpha,P)}\frac{p(\{\pi_{mi}^*\}|\gamma,\boldsymbol{\xi}^*)}{p(\{\pi_{mi}\}|\gamma,\boldsymbol{\xi}^)}\frac{p(\boldsymbol{\xi}^*|\alpha,P)}{p(\boldsymbol{\xi}|\alpha,P)}\bigg)\\
%=\min\bigg(1, &\prod_{j=1}^P\frac{(\xi_j^*)^{T_{\cdot j}^* + \alpha / P}}{(\xi_j)^{T_{\cdot j} + \alpha / P}}\frac{T_{\cdot j}^*!}{T_{\cdot j}!}\frac{\Gamma(\alpha / P +T_{\cdot j})}{\Gamma(\alpha / P +T_{\cdot j}^*)}\prod_{i=1}^{n_{\cdots}}\frac{b_{ji}!}{b_{ji}^*!}\prod_{m=1}^M\frac{a_{jmi}!}{a_{jmi}^*!}\bigg).
%\end{split}\label{eq:lhoodHDP}
%\end{equation}
A derivation of Eq.~\ref{eq:lhoodHDP} is given in Appendix~\ref{app:MHHDP}. As before, many of the ratios can be simplified further, reducing computational costs.%\swcomment{TODO: Add proof to appendix; check proof.}

As with the Dirichlet process sampler, we can either transfer the data between machines, or simply update lists of which data points are ``active'' on each machine. We can resample $\gamma$ after sampling the $\xi_j$ and $\pi_{mj}$ using a standard Metropolis Hastings step (described in Appendix~\ref{app:MHgamma}).

\section{Experimental evaluation}

Our goal in this paper is to employ parallelization to speed up inference in the DP and HDP, without introducing approximations that might compromise the quality of our model estimate. To establish whether we have achieved that goal, we perform inference in both the DP and HDP using a number of inference methods, and look at appropriate measures of model quality as a function of inference time. This captures both the speed of the algorithms and the quality of the approximations obtained.

\subsection{Dirichlet process mixture of Gaussians}
We begin by evaluating the performance of the Dirichlet process sampler described in Section~\ref{sec:DPinf}. We generated a synthetic data set of one million data points from a mixture of univariate Gaussians. We used 50 components, each with mean distributed according to $\mbox{Uniform}(0,10)$ and fixed variance of $0.01$. A synthetic data set was chosen because it allows us to compare performance with ``ground truth''. We compared four inference algorithms:
\begin{itemize}
\item \textbf{Auxiliary variable parallel Gibbs sampler (AVparallel)} -- the model proposed in this paper, implemented in Java.
\item \textbf{Sequential Monte Carlo (SMC)} -- a basic sequential importance resampling algorithm, implemented in Java.
\item \textbf{Variational Bayes (VB)} -- the collapsed variational Bayesian algorithm described in \citet{Kurihara:Welling:Teh:2007}. We used an existing Matlab implementation\footnote{Code obtained from
https://sites.google.com/site/ kenichikurihara/academic-software/variational-dirichlet-process-gaussian-mixture-model}.
\item \textbf{Asynchronous parallel DP (Asynch)} -- we modified the Asynchronous sampler for the HDP \cite{Asuncion:Smyth:Welling:2008} to be applicable to the DP. We implemented the sampler in Java, using the settings described in \citet{Asuncion:Smyth:Welling:2008}.
\end{itemize}
\begin{figure*}[ht!]
\centering
\subfigure[F1 score against run time for AVparallel.]{
	\includegraphics[width=.95\columnwidth]{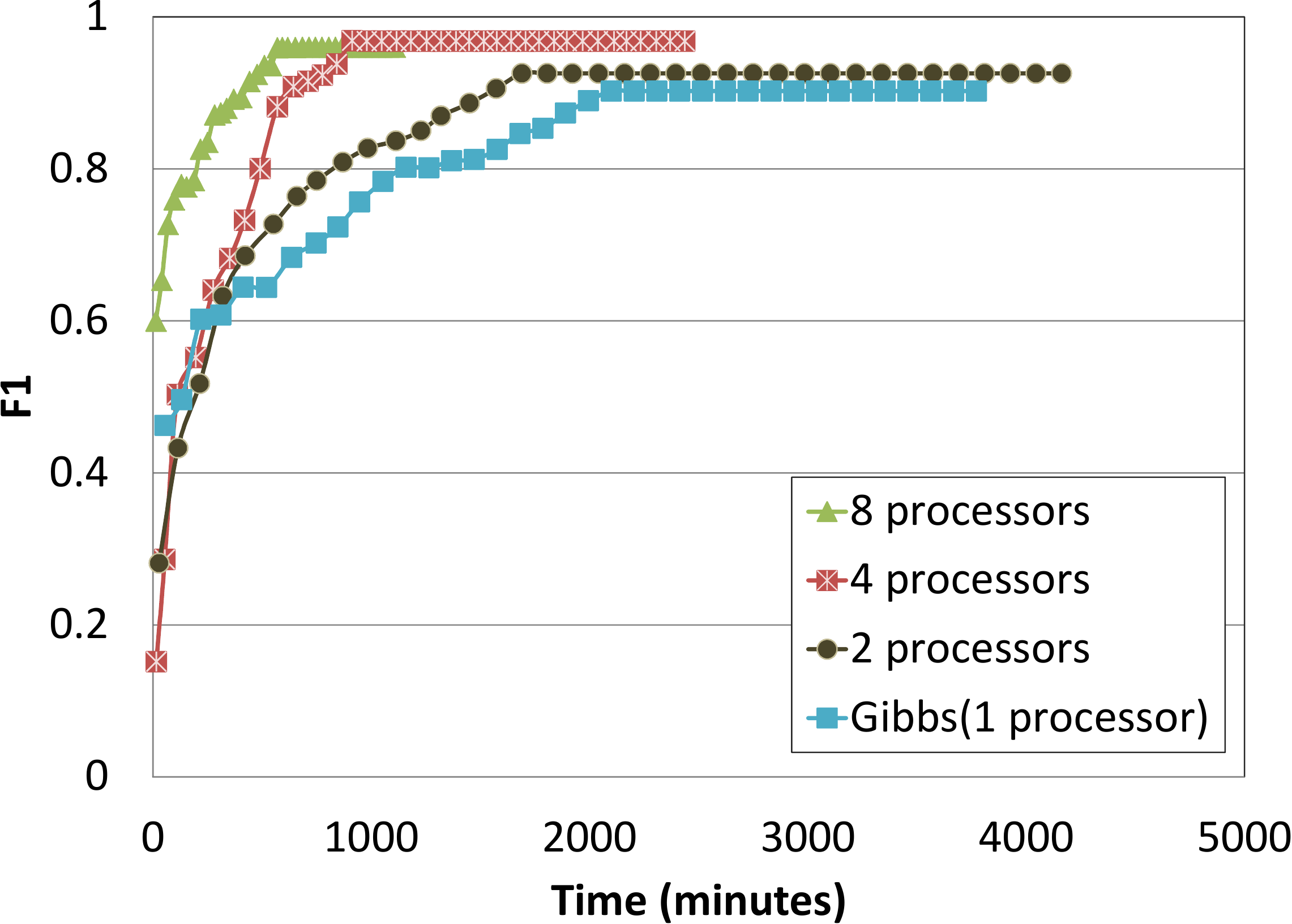}
	\label{fig:av_dp}
}
\subfigure[F1 score against run time for various algorithms. Unless otherwise specified, eight processors are used.]{
	\includegraphics[width=.95\columnwidth]{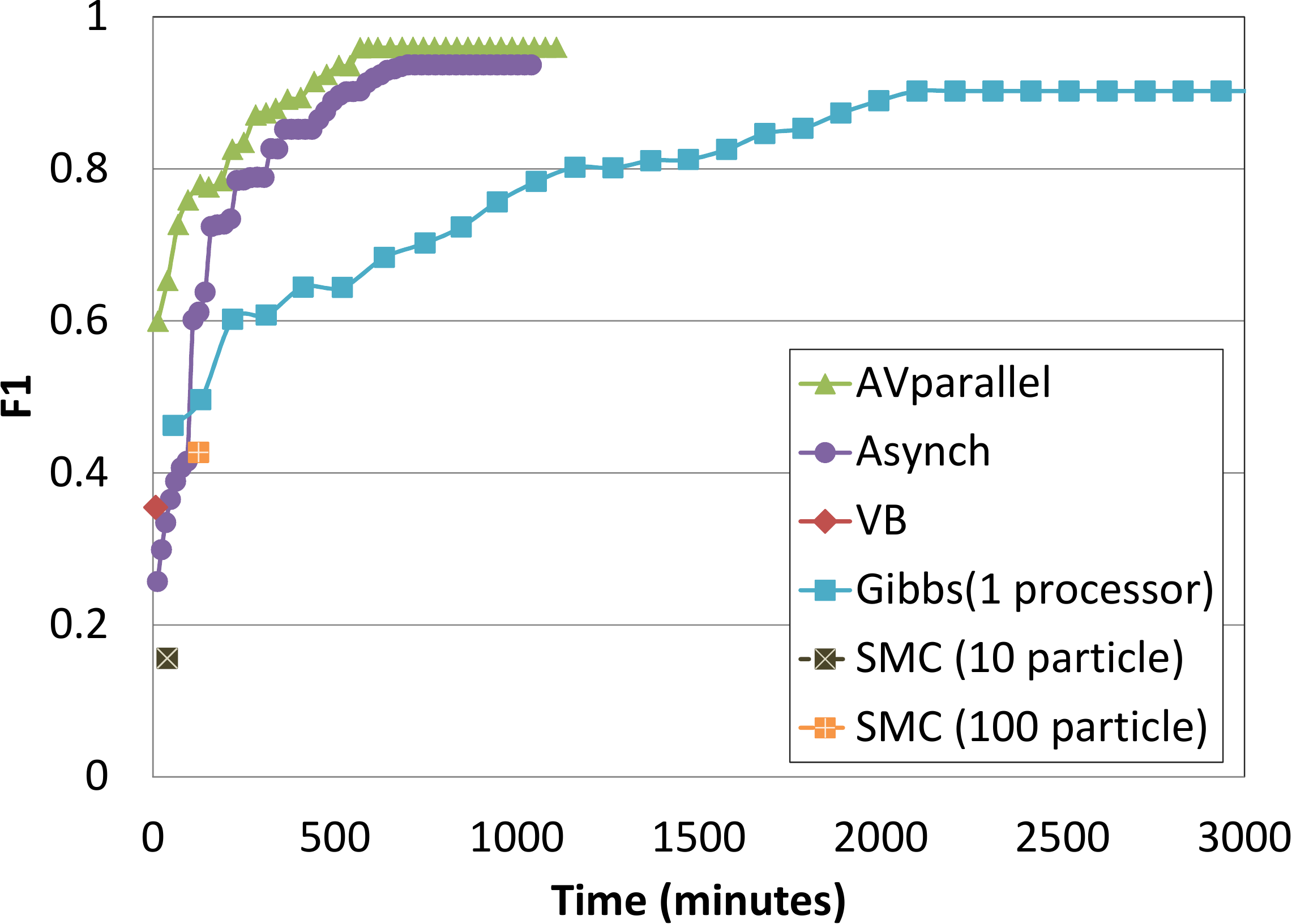}
	\label{fig:all_dp}
}
\subfigure[Time taken to reach convergence ($<0.1\%$ change in F1).]{
	\includegraphics[width=.95\columnwidth]{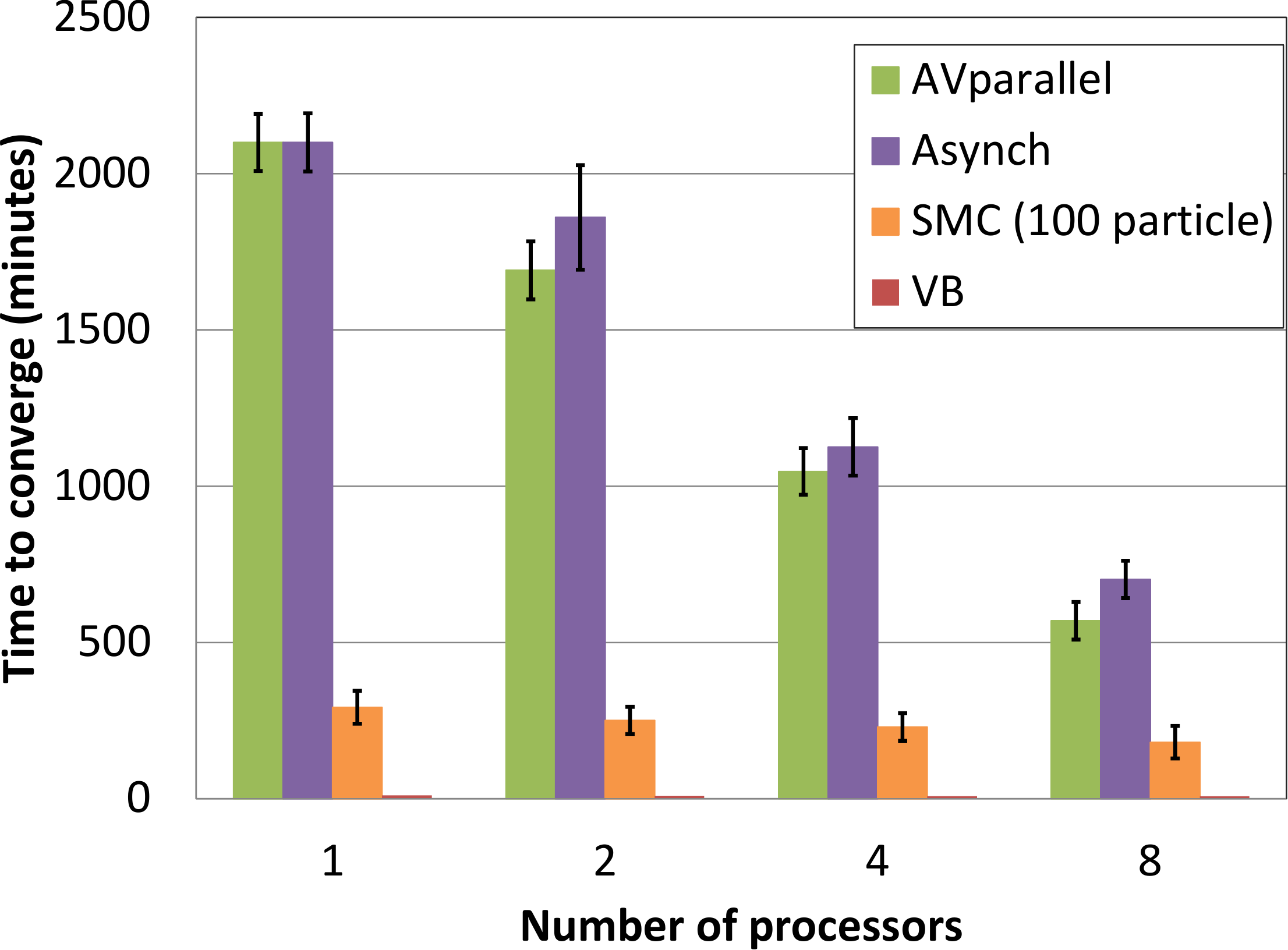}
	\label{fig:speed_dp}
}
\subfigure[Time spent in global and local steps for AVparallel, over 500 iterations.]{
	\includegraphics[width=.95\columnwidth]{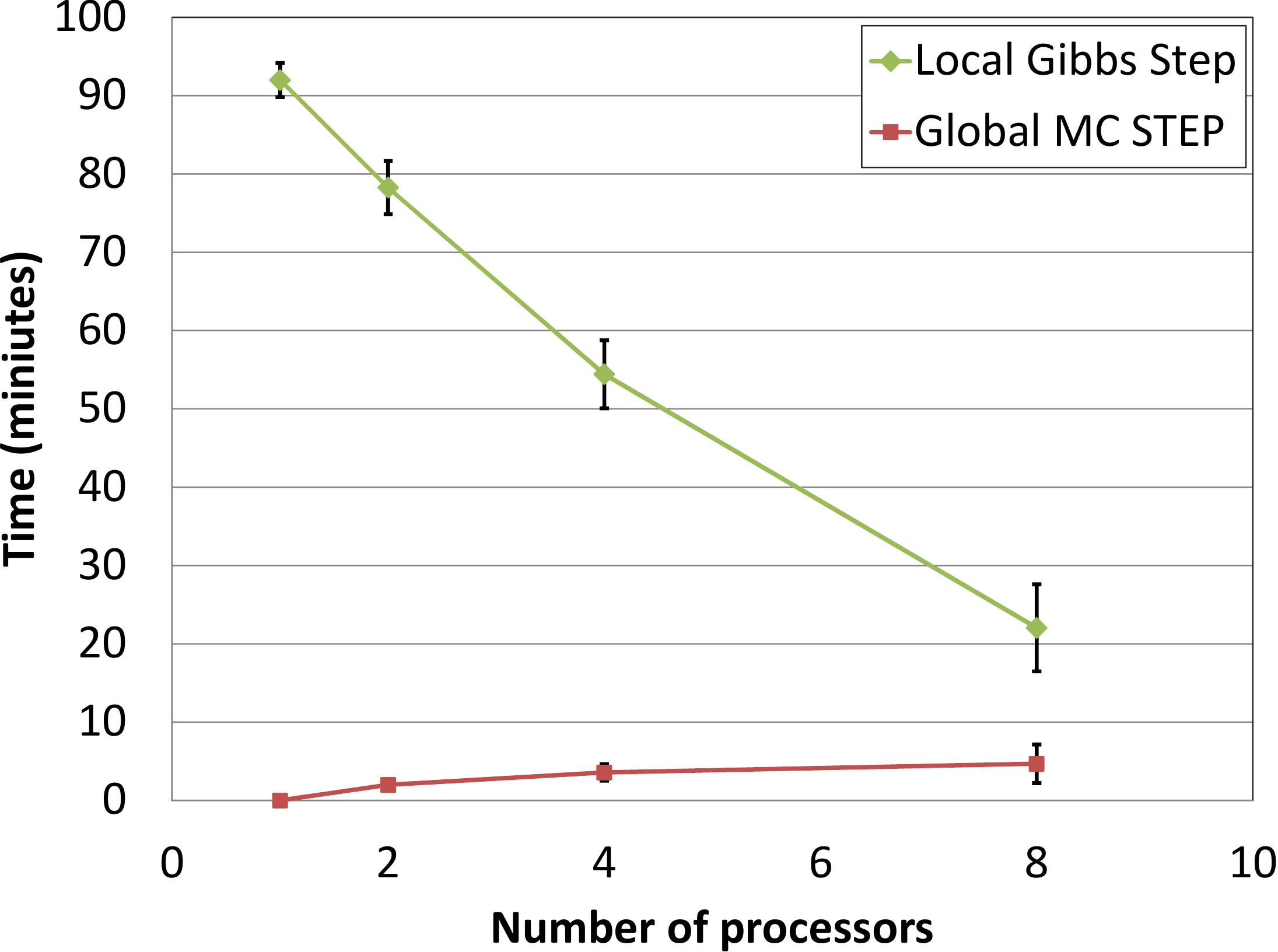}
	\label{fig:rel_dp}
}
\caption{Synthetic data modeled using a DPMM.}
\label{fig:synthetic}
\end{figure*}
In each case, we ran the algorithms on one, two, four and eight processors on a single multi-core machine\footnote{An AMD FX 8150 3.6 GHz (8 core) with 16 gig ram}, with one processor corresponding to the regular Gibbs sampler. We initialized each algorithm by clustering the data into 80 clusters using K-means clustering, and split the resulting clusters among the processors. 

We consider the F1 score between the clusterings obtained by each algorithm and the ground truth, as a function of time. Let $\mathcal{P}^{(g)}$ be the set of pairs of observations that are in the same cluster under ground truth, and let $\mathcal{P}^{(m)}$ be the set of pairs of observations that are in the same cluster in the inferred model. Then we define the F1 score of a model as the harmonic mean between the \emph{precision} -- the proportion $|\mathcal{P}^{(g)}\cap \mathcal{P}^{(m)}| /|\mathcal{P}^{(m)}|$ of pairs that are co-clustered by the model that also co-occur in the true partition -- and the \emph{recall} -- the proportion $|\mathcal{P}^{(g)}\cap \mathcal{P}^{(m)}| /|\mathcal{P}^{(g)}|$ of true co-occurrences that are co-clustered by the model. This definition of F1 is invariant to permutation, and so is appropriate in an unsupervised setting \cite{Xing:Ng:Jordan:Russell:2002}.
%F1 measure is defined as the harmonic mean of $precision$ $\frac{tp}{tp + fp}$  and $recall$. $\frac{tp}{tp+fn}$, To find precision and recall we find three quantities, $tp$ (the number of pairs of items that belong to the same cluster in ground truth and is predicted so by the algorithm), $fp$ (the number of pairs of items that belong to different cluster in the ground truth but is predicted to belong to the same cluster), $fn$ (the number of pairs of item that belongs in the same cluster but are predicted in different clusters). Then we find $precision = \frac{tp}{tp + fp}$ and $recall = \frac{tp}{tp+fn}$. This definition of F1 is invariant to permutation(refer \cite{Xing:Ng:Jordan:Russell:2002} for more details). \ericx{Please define explicitly the performance measure. Measure and comparing clustering results is actually nontrivial, there are permutation invariance issue, cluster-size imbalance issues, etc. How you define inferred cluster is correct again ground truth? My distance metric learning paper had a discuss on this. }
 %For the AVparallel and Asynch methods, we evaluated the F1 score every \swcomment{How many?} iterations. For the SMC method, we ran the particle filter with varying numbers of particles, and recorded the total time and final F1 score. 

Figure~\ref{fig:av_dp} shows the F1 scores for our auxiliary variable method over time, using one, two, four and eight processors. As we can see, increasing the number of processors increases convergence speed without decreasing performance. Figure~\ref{fig:all_dp} shows the F1 scores over time for the four methods, each using eight cores. While we can get very fast results using variational Bayesian inference, the quality of the estimate is poor. Conversely, we achieve better performance (as measured by F1 score) than competing MCMC algorithms, with faster convergence. Figure~\ref{fig:speed_dp} shows the time taken by each algorithm to reach convergence, for varying numbers of processors. AV parallel and Asynch perform similarly. Figure~\ref{fig:rel_dp} shows the relative time spent sampling the processor allocations (the global step) and sampling the cluster allocations (the local step), over 500 iterations. This explains the similar scalability of AVparallel and Asynch: In AVparallel the majority of time is spent on local Gibbs sampling, which is implemented identically in both models. 

%In each case, we looked at the F1 score between the inferred clustering and the ground truth, as a function of time. Figure [ADD FIGURE A] shows the F1 scores for our auxiliary variable method over time, using one, two, four and eight processors. Figure [ADD FIGURE B] shows the F1 scores over time for our auxiliary variable method, the variational Bayesian method, and the SMC method, each using eight cores \swcomment{Check!}. In addition, [DETAILS OF FIGURE C SHOWING SPEED UP FOR THE THREE METHODS].

%Figure [FIGURE A] shows that our parallelization method allows us to dramatically speed up inference without sacrificing estimate quality. Figure [FIGURE C] shows that, in the regime under consideration, [SOMETHING ABOUT RATE]. We do not get a linear speed-up, due to both the time required to sample the processor assignments, and the fact that the data is not necessarily equally split across the processors. [SOMETHING ABOUT RELATIVE TIMES].

%Figures [FIGURE B] and [FIGURE C] allow us to compare our performance with the SMC and VB methods. We see that the SMC and VB implementations can provide fast results, but that the accuracy of those results is poor compared with the MCMC implementations. \swcomment{Is this still true with more particles?}. The parallel MCMC methods obtain similar final accuracy, but the speed increases with the number of processors.

%\begin{figure}[ht!]
%\begin{center}
%\includegraphics[width=.9\textwidth]{pDP_comparison}
%\label{fig:DPresults}
%\caption{\swcomment{Need to add 2, 8 processors; need to add 100,500,1000 particles}}
%\end{center}
%\end{figure}
\begin{figure*}[ht!]
\centering
\subfigure[Test set perplexity against run time for AVparallel.]{
	\includegraphics[width=.95\columnwidth]{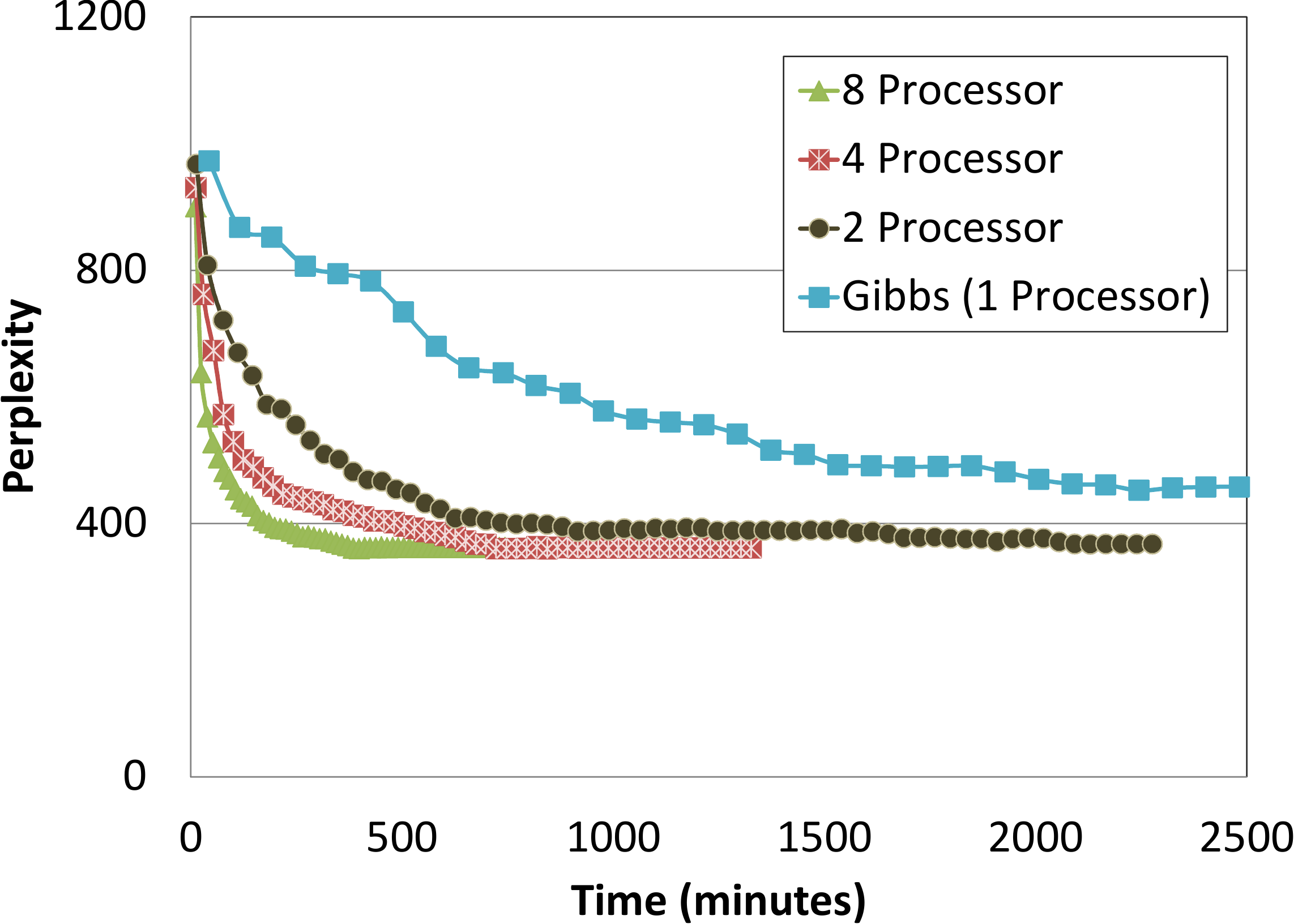}
	\label{fig:av_hdp}
}
\subfigure[Test set perplexity against run time for various algorithms. Unless otherwise specified, eight processors are used.]{
	\includegraphics[width=.95\columnwidth]{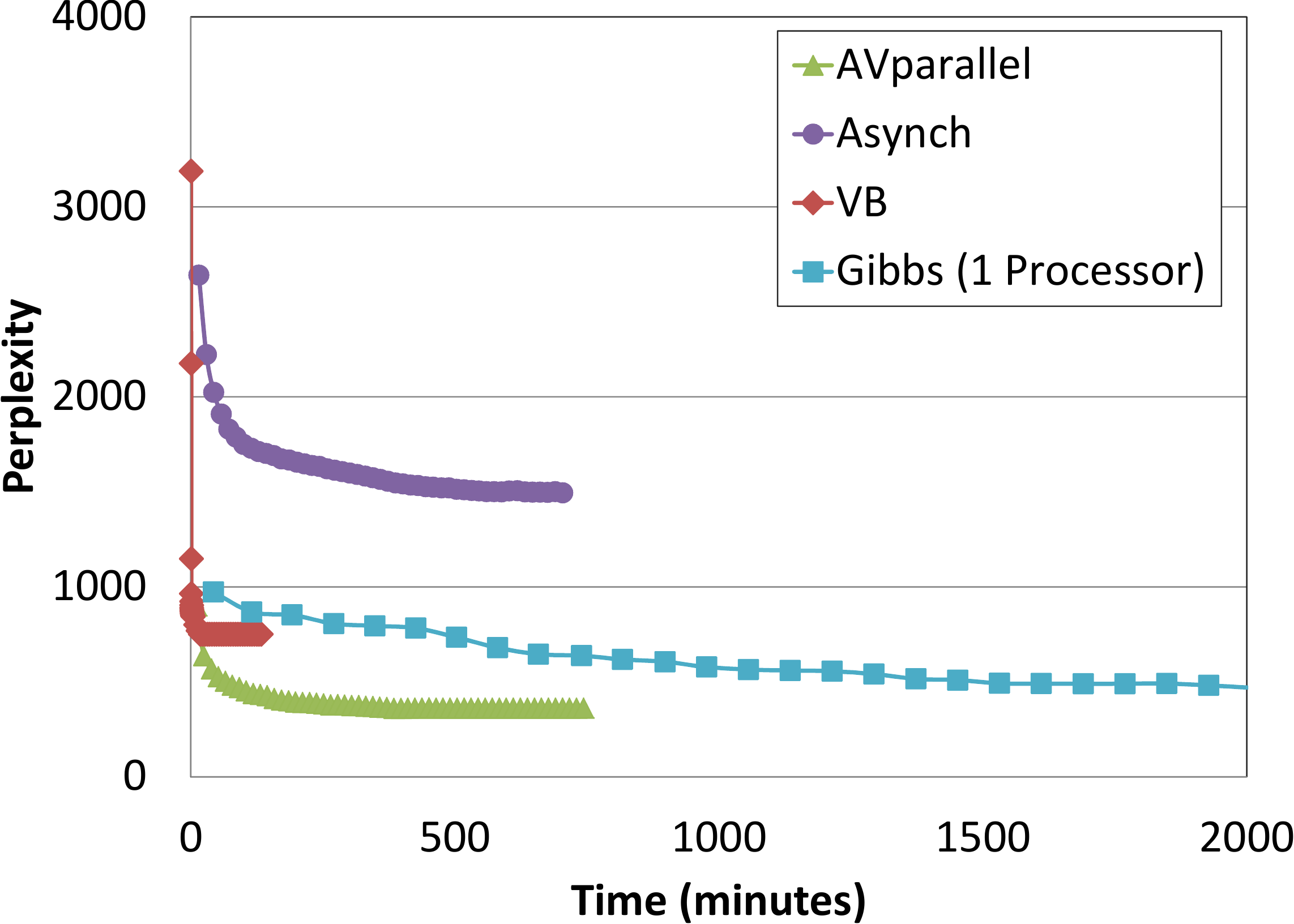}
	\label{fig:all_hdp}
}
\subfigure[Time taken to reach convergence ($<0.1\%$ change in perplexity).]{
	\includegraphics[width=.95\columnwidth]{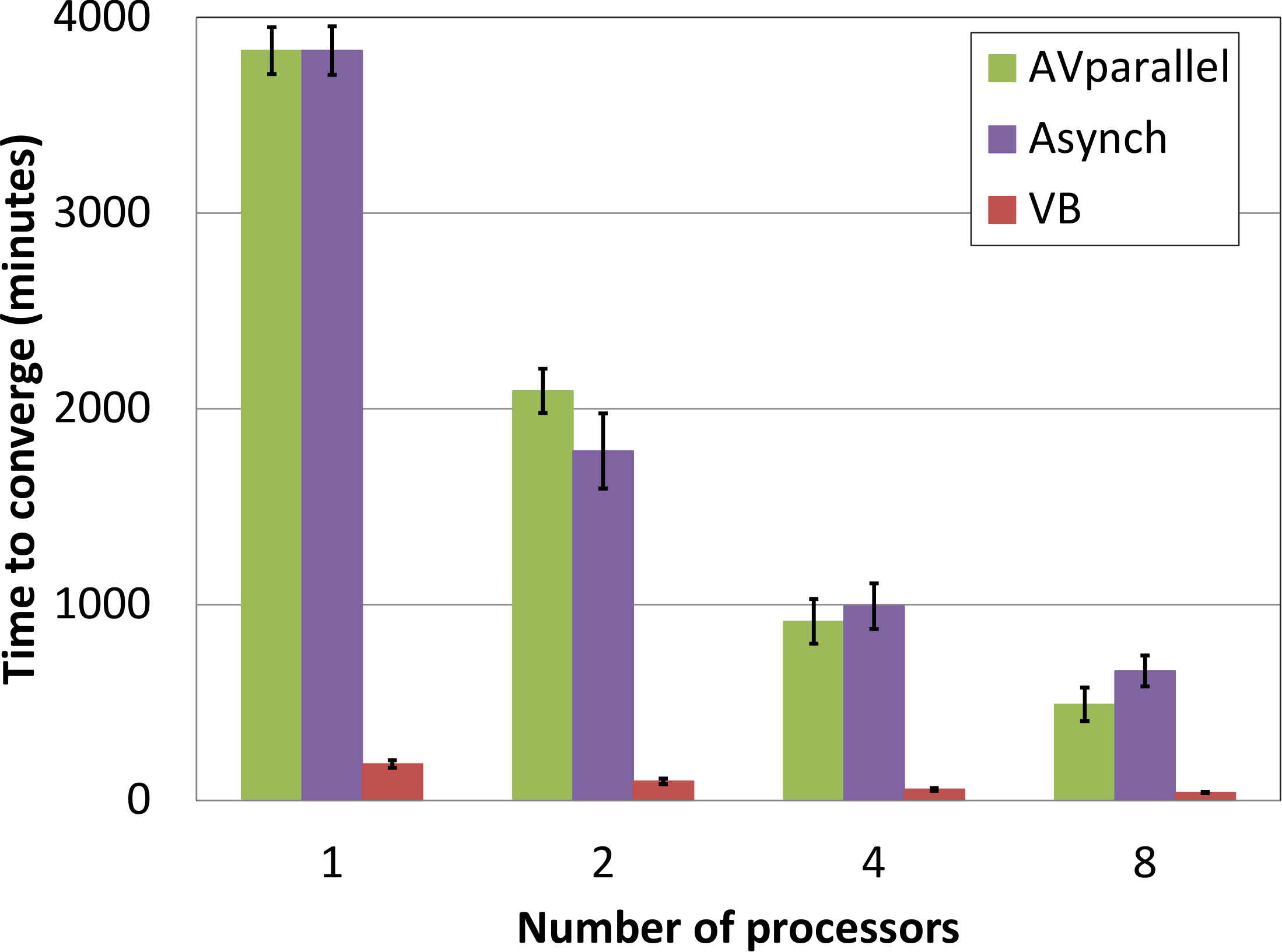}
	\label{fig:speed_hdp}
}
\subfigure[Time spent in global and local steps for AVparallel, over 20 iterations.]{
	\includegraphics[width=.95\columnwidth]{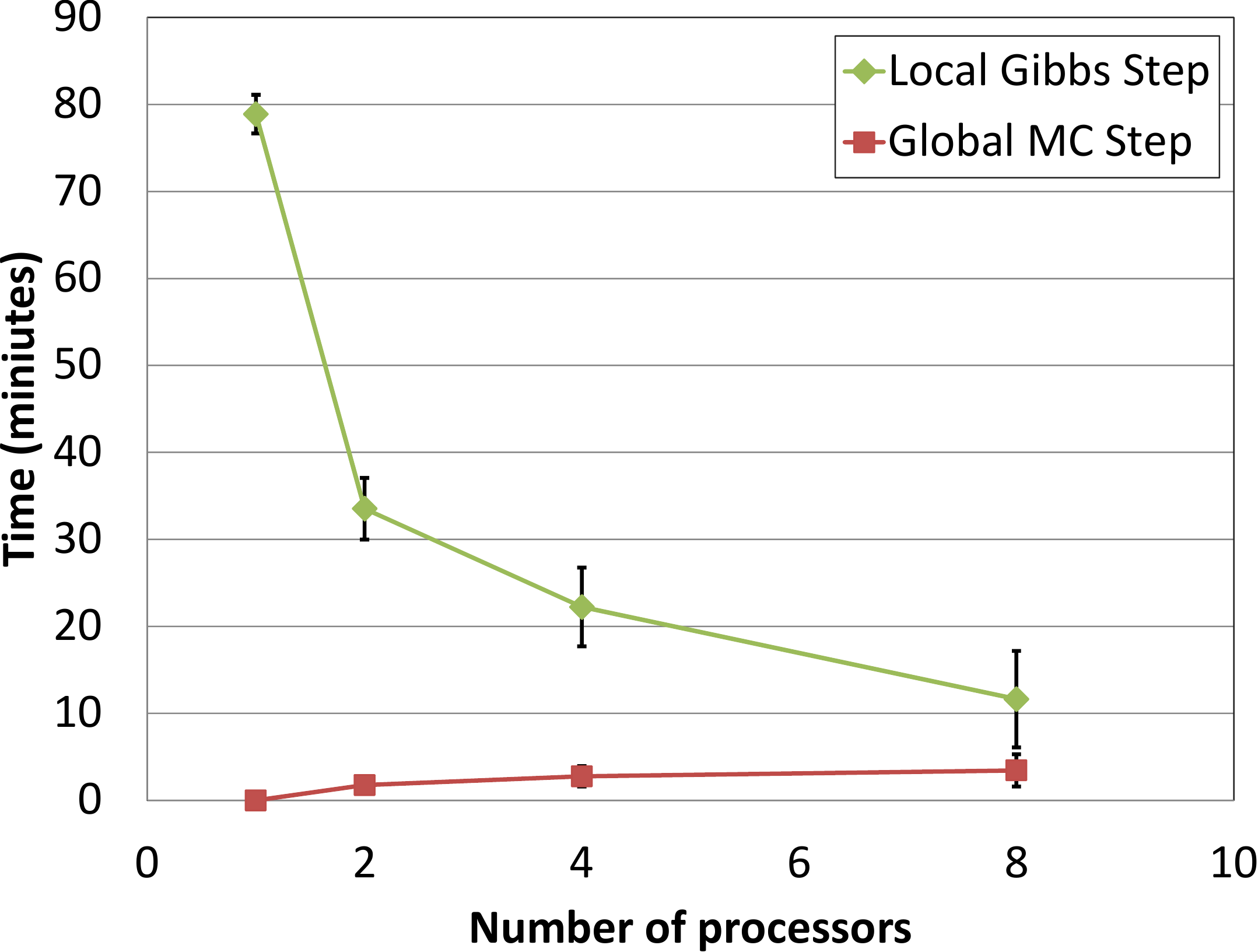}
	\label{fig:rel_hdp}
}
\caption{NIPS corpus, modeled using an HDP.}
\label{fig:NIPS}
\end{figure*}
\subsection{HDP topic model}

Next, we evaluate the performance of the HDP sampler on a topic modeling task as described by \citet{Teh:Jordan:Beal:Blei:2006}. Our dataset was a corpus of NIPS papers\footnote{http://ai.stanford.edu/~gal/data.html}, consisting of 2470 documents, containing 14300 unique words and around 3 million total words. We split the dataset into a training set of 2220 documents and a test set of 250 documents, and evaluated performance in terms of test set perplexity. We compared three inference methods:
\begin{itemize}
\item \textbf{Auxiliary variable parallel Gibbs sampler (AVparallel)} -- the model proposed in this paper, implemented in Java.
%\item \textbf{Sequential Monte Carlo (SMC)} -- a sequential Monte Carlo sampler \swcomment{Details/refs/links}, implemented in Java.
\item \textbf{Variational Bayes (VB)} -- the collapsed variational Bayesian algorithm described in \citet{Teh:Kurihari:Welling:2007}. We used an existing Java implementation\footnote{Code obtained from http://www.bradblock.com/tm-0.1.tar.gz}.
\item \textbf{Asynchronous parallel HDP (Asynch)} -- the asynchronous sampler for the HDP \cite{Asuncion:Smyth:Welling:2008}. We implemented the sampler in Java, using the settings described in the original paper.
\end{itemize}

Again, we ran each method on one, two, four and eight processors, and initialized each document to one of 80 clusters using K-means.

Figure~\ref{fig:av_hdp} shows the perplexity obtained using our auxiliary variable method over time, using one, two, four and eight processors. Figure~\ref{fig:all_hdp} compares the perplexities obtained using each of the three inference method over time, using eight processors. Figure \ref{fig:speed_hdp} shows the time to convergence of the three models for various numbers of processors.

As with the DPMM, while the variational approach is able to obtain results very quickly, the quality is much lower than that obtained using MCMC methods. The AVparallel method achieves much better perplexity than the approximate Asynch method -- the difference is much more striking than that seen in the DPMM. Note that, in the synthetic data used for the DPMM model, the true clusters are of similar size, while in the real-world data used for the HDP experiment there are likely to be many small clusters. We hypothesize that while the errors introduced in the asynchronous approximate method have little effect if the clusters are large, they become more significant if we have small clusters. Again, we find (Figure~\ref{fig:rel_hdp}) that the majority of time is spent in the local Gibbs sampler, meaning we can obtain a good rate of increase of speed by increasing the number of processors.

\section{Discussion and future work}
We have shown how alternative formulations for the DP and HDP can yield parallelizable Gibbs samplers that allow scalable and accurate inference. Our experiments show that the resulting algorithms offer better performance and scalability than existing parallel inference methods. 

%In this paper, we have shown how alternative formulations for the DP and HDP can yield parallelizable Gibbs samplers that allow scalable and accurate inference. Our experiments show that the resulting algorithms offer better performance and scaleability than existing parallel inference methods. 
The scalability of our algorithms is limited by the size of the largest cluster, since each cluster must reside on a single processor. An interesting avenue for future development is to investigate approximate methods for splitting large clusters onto multiple processors.
%
%In this paper, we have focused on the Dirichlet process and the hierarchical Dirichlet process, as these are two of the most commonly used Bayesian nonparametric priors. A future goal is to investigate which other nonparametric models can be parallelized in a similar manner.

We have implemented and evaluated these methods on multi-core machines; our next goals are to develop and publish code appropriate for multi-machine architectures, and to extend our approach to other nonparametric models.

\bibliography{pDP}
\bibliographystyle{icml2013}
\appendix

%In this appendix, we provide more in-depth proofs of the theorems and derive the Metropolis Hastings acceptance probabilities presented in the main paper. 

\section{Theorem expanded proofs}\label{app:proofs}

\begin{theorem}[Auxiliary variable representation for the DPMM]
%If $\phi\sim\Dir(\alpha_1,...,\alpha_K)$ and $D_k \sim \DP(\alpha_k, H_k)$, then $D := \sum_k\phi_kD_k\sim \DP(\sum_k\alpha_k H_k)$. Therefore, w
We can re-write the generative process for a DPMM as
\begin{equation}
\begin{split}
D_j\sim\DP\left(\frac{\alpha}{P}, H\right), \;\;\;\phi \sim \Dir\left(\frac{\alpha}{P},\dots,\frac{\alpha}{P}\right),\\
\pi_i\sim\phi,\qquad
\theta_i \sim D_{\pi_i}, \qquad
x_i\sim f(\theta_i),\qquad
\end{split}
\label{eq:DPaux}
\end{equation}
for $j=1,\dots,P$ and $i=1,\dots,N$.
The marginal distribution over the $x_i$ remains the same.\label{thm:DP}
\end{theorem}

\begin{proof}

In the main paper, we proved the general result, that if $\phi\sim\Dir(\alpha_1,...,\alpha_P)$ and $D_j \sim \DP(\alpha_j, H_j)$, then $D := \sum_j\phi_jD_j\sim \DP(\sum_j\alpha_j,  \frac{\sum_j\alpha_jH_j}{\sum_j\alpha_j})$. This result has been used by authors including \citet{Rao:Teh:2009}.

Here, we provide an explicit proof that shows the resulting predictive distribution is that of the Dirichlet process.

Let $\theta_1, \theta_2, \ldots $ be a sequence of random variable distributed according to $G \sim DP(\alpha,G_0)$. Then the conditional distribution of $\theta_{n+1}$ given $\theta_1, \ldots, \theta_{n}$ where $G$ has been integrated is given by

\begin{eqnarray}
\theta_{n+1} | \theta_1, \ldots, \theta_{n} \sim \sum_{l = 1}^{n} \frac{1}{n+\alpha} \delta_{\theta_l} + \frac{\alpha}{n  + \alpha} G_0
\end{eqnarray}

If $D_j \sim DP(\alpha/P, G_0)$, $\phi \sim Dir(\frac{\alpha}{P},\ldots, \frac{\alpha}{P})$, $\pi_i \sim \phi$ and $\theta_i \sim D_{\phi_i}$ then the conditional distribution of $\theta_{n+1}$ given $\theta_1, \ldots, \theta_{n}$ where $D_j, \forall j$ and $\phi$  has been integrated is given by 

\begin{equation}
\begin{split}
\theta_{n+1} |& \theta_1, \ldots, \theta_{n} \\\sim& \sum_{j = 1}^{P} P(\pi_{n+1} = j | \pi_1, \ldots, \pi_{n} ) \\
&P(\theta_{n+1} | \pi_{n+1}=j,\pi_1,\ldots,\pi_n, \theta_1, \ldots \theta_{n},G_0) \\
 =& \sum_j \frac{n_j + \alpha/P}{n+\alpha} \\
&\left \{ \sum_{l = 1}^{n} \frac{1}{n_j+\alpha/P} \delta_{\theta_l}\delta_{\pi_l = j} + \frac{\alpha/P}{n_j  + \alpha/P} G_0 \right \} \\
 =& \sum_{l = 1}^{n} \frac{1}{n+\alpha} \delta_{\theta_l} + \frac{\alpha}{n  + \alpha} G_0
\end{split}
\end{equation}
\end{proof}

\begin{theorem}[Auxiliary variable representation for the HDP]
If we incorporate the requirement that the concentration parameter $\gamma$ for the bottom level DPs $\{D_j\}_{j=1}^M$ depends on the concentration parameter $\alpha$ for the top level DP $D_0$ as $\gamma\sim\Gam(\alpha)$, then we can rewrite the generative process for the HDP as:
%If we incorporate the requirement that $\gamma\sim\Gam(\alpha)$, then we can rewrite the generative process for the HDP (Eq.~\ref{eq:HDP}) as:
%\begin{equation}
%\begin{split}
%\zeta_j &\sim \Gam(\alpha / P),\;\;\; j=1,\dots,P\\
%D_{0j} &\sim \DP(\alpha/P),\;\;\; j=1,\dots,P\\
%\nu_m &\sim \Dir(\zeta_1,\dots,\zeta_P)\\
%D_{mj} &\sim \DP(\zeta_j,D_{0j}),\;\;\; m = 1,\dots, M,\;\; j=1,\dots,P\\
%\pi_{mi} &\sim \nu_m,\;\;\; m=1,\dots,M,\;\; i=1,\dots,N_m\\
%\theta_{mi} &\sim D_{m\pi_{mi}}\\
%x_{mi}&\sim f(\theta_{mi})\label{eq:HDPaux}
%\end{split}
%\end{equation}
\begin{equation}
\begin{alignedat}{4}
\zeta_j &\sim \Gam(\alpha / P) &\qquad \qquad &\pi_{mi}\sim& \nu_m \quad \;\;\\
D_{0j} &\sim \DP(\alpha/P, H) &\qquad \qquad &\theta_{mi} \sim& D_{m\pi_{mi}}\\
\nu_m &\sim \Dir(\zeta_1,\dots,\zeta_P) &\qquad \qquad &x_{mi}\sim& f(\theta_{mi})\\
D_{mj} &\sim \DP(\zeta_j,D_{0j}), & & &
\label{eq:HDPaux}
\end{alignedat}
\end{equation}
for $j=1,\dots,P$, $m=1,\dots,M$, and $i= 1,\dots,N_m$. \label{thm:HDP}
\end{theorem}

\begin{proof}
Let $\zeta_j \sim \Gam(\alpha / P)$ and $D_{0j}\sim \DP(\alpha/P, H)$, $j=1,\dots,P$. This implies that $G_{0j}:=\zeta_jD_{0j} \sim \GaP((\alpha / P) H)$ and $\gamma:=\sum_{j=1}^P \zeta_j \sim \Gam(\alpha)$.

By superposition of gamma processes, 
\begin{equation*}
\begin{split}
G_0 &:= \sum_{j=1}^PG_{0j} \sim \GaP(\alpha H)\\
&:= \sum_{j=1}^P\zeta_jD_{0j}
\end{split}
\end{equation*}

Normalizing $G_0$, we get
\begin{equation*}
D_0(\cdot) := \frac{G_0(\cdot)}{\gamma} = \sum_{j=1}^P\frac{\zeta_j}{\gamma}D_{0j} \sim \DP(\alpha,H)
\end{equation*}
as required by the HDP.

Now, for $m=1,\dots, M$ and $j=1,\dots,P$, let $\eta_{mj}\sim \Gamma(\zeta_j)$ and $D_{mj}\sim \DP(\zeta_j, D_{0j})$. This implies that 
\begin{equation*}
\begin{split}
G_{mj}:=\eta_{mj}D_{mj} &\sim \GaP(\zeta_j D_{0j})\\
&:= \GaP(G_{0j}).
\end{split}
\end{equation*}
Superposition of the gamma processes gives
\begin{equation*}
\begin{split}
G_m := \sum_jG_{mj} &\sim \GaP(\sum_{j=1}^P G_{0j}) \\&:= \GaP(G_{0})
= \GaP(\gamma D_{0}).
\end{split}
\end{equation*}

The total mass of $G_m$ is given by $\sum_{j=1}^P\eta_{mj}$, so
\begin{equation}
D_m(\cdot) = \frac{G_m(\cdot)}{\sum_{j=1}^P\eta_{mj}} \sim \DP(\gamma, D_0)\label{eq:1}
\end{equation}
as required by the HDP.

If we let $\nu_{mj} = \eta_{mj} /sum_{k=1}^P\eta_{mk}$, then we can rewrite Equation~\ref{eq:1} as
\begin{equation}
D_m(\cdot) = \sum_{j=1}^P\nu_{mj}D_{mj} \sim \DP(\gamma, D_0),
\end{equation}
where $(\nu_{m1},\dots, \nu_{mP}) \sim \Dir(\zeta_1,\dots, \zeta_P)$.
\end{proof}

\section{Metropolis Hastings acceptance probabilities}\label{app:MH}
In both cases, the proposal probabilities $q(\{\pi_i\}\rightarrow \{\pi_i^*\}) = q(\{\pi_i^*\}\rightarrow \{\pi_i\})$, so we need only consider the likelihood ratios.

\subsection{Dirichlet process}\label{app:MHDP}

In the Dirichlet process case, the likelihood ratio is given by:

\begin{equation}
\begin{split}
\frac{p(\{\pi_i^*\})}{p(\{\pi_i\})} &= \frac{p(\{x_i\}|\pi_i^*) p(\{\pi_i^*\}|\alpha,P)}{p(\{x_i\}|\pi_i) p(\{\pi_i\}|\alpha,P)}\\
&= \frac{p(\{z_i\}|\pi_i^*) p(\{\pi_i^*\}|\alpha,P)}{p(\{z_i\}|\pi_i) p(\{\pi_i\}|\alpha,P)}\\
&= \prod_{j=1}^{P}\prod_{i=1}^{\max(N_j,N_j^*)}\frac{a_{ij}!}{a_{ij}^*!},
\end{split}\label{eq:lhood}
\end{equation}
where $N_j$ is the number of data points on processor $j$, and $a_{ij}$ is the number of clusters of size $i$ on processor $j$.

%\swcomment{This is obtained by combining the Dirichlet compound multinomial distribution with the Ewen's sampling formula.

The probability of the processor allocations is described by the Dirichlet compound multinomial, or multivariate P\"{o}lya, distribution,
\begin{equation*}
\begin{split}
p(\{\pi_i\}|\alpha,\pi) =& \frac{N!}{\prod_{j=1}^{P}N_j!}\frac{\Gamma(\sum_{j=1}^P\alpha / P)}{\Gamma(N + \sum_{j=1}^P\alpha / P)}\\
&\prod_{j=1}^{P}\frac{\Gamma(N_j + \alpha / P)}{\Gamma(\alpha / P)}\\
=& \frac{N!}{\prod_{j=1}^{P}N_j!}\frac{\Gamma(\alpha)}{\Gamma(N + \alpha)}\prod_{j=1}^{P}\frac{\Gamma(N_j + \alpha / P)}{\Gamma(\alpha / P)},
\end{split}
\end{equation*}
where $N = \sum_{j=1}^PN_j$ is the total number of data points. So,
\begin{equation*}
\frac{p(\{\pi_i^*\}|\alpha,\pi)}{p(\{\pi_i\}|\alpha,\pi)}=\prod_{j=1}^P\frac{N_j}{N_j^*}\frac{\Gamma(N_j^*+\alpha / P)}{\Gamma(N_j + \alpha / P)}
\end{equation*}

Conditioned on the processor indicators, the probability of the data can be written
\begin{equation*}
p(\{z_i\}|\{\pi_i\}) =\prod_{j=1}^{P}p(\{n_{jk}\}|N_j),
\end{equation*}
where $n_{jk}$ is the number of data points in the $k$th on processor $j$. The distribution over cluster sizes in the Chinese restaurant process is described by Ewen's sampling formula, which gives:
\begin{equation*}
p(\{n_{jk}\}|N_j) =\bigg(\frac{\alpha}{P}\bigg)^{K_j} \frac{N_j!}{\prod_{k=1}^{K_j}n_{jk}!} \frac{\Gamma(\alpha / P)}{\Gamma(N_j + \alpha / P)}\prod_{i=1}^{N_j}\frac{1}{a_j!}
\end{equation*}
where $K_j$ is the total number of clusters on processor $j$. Therefore,
\begin{equation*}
\frac{p(\{n_{jk}^*\}|N_j)}{p(\{n_{jk}\}|N_j)} = \prod_j\frac{N_j^*!}{N_j!}\frac{\Gamma(N_j + \alpha / P)}{\Gamma(N_j^* + \alpha / P)}\prod_{1}^{\max(N_j,N_j^*)}\frac{a_{ij}!}{a_{ij}^*!},
\end{equation*}
so we get Equation~\ref{eq:lhood}.

\subsection{Hierarchical Dirichlet processes}\label{app:MHHDP}

For the HDP, the likelihood ratio is given by
\begin{equation*}
 \frac{p(\{x_{mi}\}|\{\pi_{mi}^*\,\gamma,\boldsymbol{\xi}^*,\alpha,P)}{p(\{x_{mi}\}|\{\pi_{mi}\,\gamma,\boldsymbol{\xi},\alpha,P)}\frac{p(\{\pi_{mi}^*\}|\gamma,\boldsymbol{\xi}^*)}{p(\{\pi_{mi}\}|\gamma,\boldsymbol{\xi}^)}\frac{p(\boldsymbol{\xi}^*|\alpha,P)}{p(\boldsymbol{\xi}|\alpha,P)}
\end{equation*}

The first term in the likelihood ratio is the ratio of the joint probabilities of the topic- and table-allocations in the local HDPs. This can be obtained by applying the Ewen's sampling formula to both top- and bottom-level DPs. Let $\mathbf{t}_{j}$ be the count vector for the top-level DP on processor $j$ -- in Chinese restaurant franchise terms, $t_{jd}$ is the number of tables on processor $j$ serving dish $d$. Let $\mathbf{n}_{jm}$ be the count vector for the $m$th bottom-level DP on processor $j$ -- in Chinese restaurant franchise terms, $n_{jmk}$ is the number of customers in the $m$th restaurant sat at the $k$th table of the $j$th processor. Let $T_{mj}$ be the total number of occupied tables from the $m$th restaurant on processor $j$, and let $D_j$ be the total number of unique dishes on processor $j$. Let $a_{jmi}$ be the total number of tables in restaurant $m$ on processor $j$ with exactly $i$ customers, and $b_{ji}$ be the total number of dishes on processor $j$ served at exactly $i$ tables. We use the notation $n_{jm\cdot} = \sum_k n_{jmk}$, $T_{\cdot j} = \sum_m T_{mj}$, etc. 

\begin{equation*}
\begin{split}
&p(\{n_{jmk}\} |\gamma, \boldsymbol{\xi})\\
 =& \prod_{m=1}^{M}\prod_{j=1}^P(\gamma\xi_j)^{T_{mj}}\frac{n_{jm\cdot}!}{\prod_{k=1}^{T_{mj}}n_{jmk}!}\frac{\Gamma(\gamma\xi_j)}{\Gamma(\gamma \xi_j + n_{jm\cdot})} \prod_{i=1}^{N_j}\frac{1}{a_{jmi}!},
\end{split}
\end{equation*}
and
\begin{equation*}
\begin{split}
&p(\{t_{jd}\} | \alpha, P)\\
 =& \prod_{j=1}^P \bigg(\frac{\alpha}{P}\bigg)^{D_j}\frac{T_{\cdot j}!}{\prod_{d=1}^{D_j}t_{jd}!}\frac{\Gamma(\alpha / P)}{\Gamma(\alpha / P + T_{\cdot j})}\prod_{i=1}^{T_{\cdot j}}\frac{1}{b_{ji}},
\end{split}
\end{equation*}

so
\begin{equation}
\begin{split}
 &\frac{p(\{t_{jd}^*\},\{n_{jmk}^*\}|rest)}{p(\{t_{jd}\},\{n_{jmk}\}|rest)}\\
=& 
\prod_{j=1}^{P}\frac{(\xi_j^*)^{T_{\cdot j}^*}}{(\xi_j)^{T_{\cdot j}}}\frac{T_{\cdot j}^* !}{T_{\cdot j} !}\frac{\Gamma(\alpha / P + T_{\cdot j})}{\Gamma(\alpha / P + T_{\cdot j}^*)}\bigg(\frac{\Gamma(\gamma \xi_j^*)}{\Gamma(\gamma \xi_j)}\bigg)^M\\
\textstyle &\bigg\{ \prod_{i=1}^{\max(T_{\cdot j},T_{\cdot j}^*)}\frac{b_{ji}!}{b_{ji}^*!}\bigg\} \prod_{m=1}^{M}\frac{n_{jm\cdot}^*!}{n_{jm\cdot}!}\frac{\Gamma(\gamma\xi_j + n_{jm\cdot})}{\Gamma(\gamma\xi_j^* + n_{jm\cdot}^*)}\\
&\prod_{i=1}^{\max(N_j,N_j^*)}\frac{a_{jmi}!}{a_{jmi}^*!}.
\end{split}\label{eq:HDP1}
\end{equation}

The probability of the processor assignments is given by:
\begin{equation*}
\begin{split}
p(\{\pi_{mi}\}|\gamma,\boldsymbol{\xi}) =& \prod_{m=1}^{M}\frac{n_{\cdot m \cdot}!}{\prod_{j=1}^P n_{jm\cdot}!}\frac{\Gamma(\gamma)}{\Gamma(n_{\cdot m \cdot} + \gamma)}\\
&\prod_{j=1}^P\frac{\Gamma(\gamma \xi_j + n_{jm\cdot})}{\Gamma(\gamma \xi_j)},
\end{split}
\end{equation*}
so the second term is given by
\begin{equation}
\begin{split}
\frac{p(\{\pi_{mi}^*\}|\gamma,\boldsymbol{\xi})}{p(\{\pi_{mi}\}|\gamma,\boldsymbol{\xi})} =&  \prod_{j=1}^P \bigg(\frac{\Gamma(\gamma \xi_j)}{\Gamma(\gamma \xi_j^*)}\bigg)^M \\
&\prod_{m=1}^M\frac{n_{jm\cdot}!}{n_{jm\cdot}^*!}\frac{
\Gamma(\gamma \xi_j^* + n_{jm\cdot}^*)}{
\Gamma(\gamma \xi_j + n_{jm\cdot})}\label{eq:HDP2}
\end{split}
\end{equation}

The third term is given by
\begin{equation}
 \frac{p(\boldsymbol{\xi}^*|\alpha,P)}{p(\boldsymbol{\xi}|\alpha,P)} = \prod_{j=1}^P \bigg(\frac{\xi_j^*}{\xi_j}\bigg)^{\frac{\alpha}{P}}.\label{eq:HDP3}
\end{equation}

Combining Equations~\ref{eq:HDP1}, \ref{eq:HDP2} and \ref{eq:HDP3} gives an acceptance probability of $\min(1,r)$, where
\begin{equation}
r =  \prod_{j=1}^P\frac{(\xi_j^*)^{T_{\cdot j}^* + \alpha / P}}{(\xi_j)^{T_{\cdot j} + \alpha / P}}\frac{T_{\cdot j}^*!}{T_{\cdot j}!}\frac{\Gamma(\alpha / P +T_{\cdot j})}{\Gamma(\alpha / P +T_{\cdot j}^*)}\prod_{i=1}^{n_{\cdots}}\frac{b_{ji}!}{b_{ji}^*!}\prod_{m=1}^M\frac{a_{jmi}!}{a_{jmi}^*!}.
\end{equation}
\subsection{Sampling $\gamma$}\label{app:MHgamma}
We sample the HDP parameter $\gamma$ using reversible random walk Metropolis Hastings steps, giving an acceptance probability of
\begin{align*}
\min\bigg(1,{\left ( \frac{\gamma^*}{\gamma} \right )}^{T_{..}} 
{\left [ \frac{\Gamma(\gamma^*)}{\Gamma(\gamma)} \right ]}^M
\prod_{m=1}^M \frac{\Gamma(n_{.m.} + \gamma)}{\Gamma(n_{.m.} + \gamma^*)}
\bigg).
\end{align*}
%\bibliography{pDP}
%\bibliographystyle{icml2013}

\end{document}

% --- supplement: AVparallel_supplementary.tex ---

\twocolumn[
\icmltitle{Exact and Efficient Parallel Inference for Nonparametric Mixture Models: Supplementary Material}
% It is OKAY to include author information, even for blind
% submissions: the style file will automatically remove it for you
% unless you've provided the [accepted] option to the icml2013
% package.
\icmlauthor{Your Name}{email@yourdomain.edu}
\icmladdress{Your Fantastic Institute,
            314159 Pi St., Palo Alto, CA 94306 USA}
\icmlauthor{Your CoAuthor's Name}{email@coauthordomain.edu}
\icmladdress{Their Fantastic Institute,
            27182 Exp St., Toronto, ON M6H 2T1 CANADA}

% You may provide any keywords that you 
% find helpful for describing your paper; these are used to populate 
% the "keywords" metadata in the PDF but will not be shown in the document

\vskip 0.3in
]

In this document, we provide more in-depth proofs of the theorems and derive the Metropolis Hastings acceptance probabilities presented in the main paper. 

\section{Theorem expanded proofs}

\begin{theorem}[Auxiliary variable representation for the DPMM]
%If $\phi\sim\Dir(\alpha_1,...,\alpha_K)$ and $D_k \sim \DP(\alpha_k, H_k)$, then $D := \sum_k\phi_kD_k\sim \DP(\sum_k\alpha_k H_k)$. Therefore, w
We can re-write the generative process for a DPMM as
\begin{equation}
\begin{split}
D_j\sim\DP\left(\frac{\alpha}{P}, H\right), \;\;\;\phi \sim \Dir\left(\frac{\alpha}{P},\dots,\frac{\alpha}{P}\right),\\
\pi_i\sim\phi,\qquad
\theta_i \sim D_{\pi_i}, \qquad
x_i\sim f(\theta_i),\qquad
\end{split}
\label{eq:DPaux}
\end{equation}
for $j=1,\dots,P$ and $i=1,\dots,N$.
The marginal distribution over the $x_i$ remains the same.\label{thm:DP}
\end{theorem}

\begin{proof}

In the main paper, we proved the general result, that if $\phi\sim\Dir(\alpha_1,...,\alpha_P)$ and $D_j \sim \DP(\alpha_j, H_j)$, then $D := \sum_j\phi_jD_j\sim \DP(\sum_j\alpha_j,  \frac{\sum_j\alpha_jH_j}{\sum_j\alpha_j})$. This result has been used by authors including \citet{Rao:Teh:2009}.

Here, we provide an explicit proof that shows the resulting predictive distribution is that of the Dirichlet process.

Let $\theta_1, \theta_2, \ldots $ be a sequence of random variable distributed according to $G \sim DP(\alpha,G_0)$. Then the conditional distribution of $\theta_{n+1}$ given $\theta_1, \ldots, \theta_{n}$ where $G$ has been integrated is given by

\begin{eqnarray}
\theta_{n+1} | \theta_1, \ldots, \theta_{n} \sim \sum_{l = 1}^{n} \frac{1}{n+\alpha} \delta_{\theta_l} + \frac{\alpha}{n  + \alpha} G_0
\end{eqnarray}

If $D_j \sim DP(\alpha/P, G_0)$, $\phi \sim Dir(\frac{\alpha}{P},\ldots, \frac{\alpha}{P})$, $\pi_i \sim \phi$ and $\theta_i \sim D_{\phi_i}$ then the conditional distribution of $\theta_{n+1}$ given $\theta_1, \ldots, \theta_{n}$ where $D_j, \forall j$ and $\phi$  has been integrated is given by 

\begin{equation}
\begin{split}
\theta_{n+1} | \theta_1, \ldots, \theta_{n} \sim& \sum_{j = 1}^{P} P(\pi_{n+1} = j | \pi_1, \ldots, \pi_{n} ) \\
&P(\theta_{n+1} | \pi_{n+1}=j,\pi_1,\ldots,\pi_n, \theta_1, \ldots \theta_{n},G_0) \\
 =& \sum_j \frac{n_j + \alpha/P}{n+\alpha} \\
&\left \{ \sum_{l = 1}^{n} \frac{1}{n_j+\alpha/P} \delta_{\theta_l}\delta_{\pi_l = j} + \frac{\alpha/P}{n_j  + \alpha/P} G_0 \right \} \\
 =& \sum_{l = 1}^{n} \frac{1}{n+\alpha} \delta_{\theta_l} + \frac{\alpha}{n  + \alpha} G_0
\end{split}
\end{equation}
\end{proof}

\begin{theorem}[Auxiliary variable representation for the HDP]
If we incorporate the requirement that the concentration parameter $\gamma$ for the bottom level DPs $\{D_j\}_{j=1}^M$ depends on the concentration parameter $\alpha$ for the top level DP $D_0$ as $\gamma\sim\Gam(\alpha)$, then we can rewrite the generative process for the HDP as:
%If we incorporate the requirement that $\gamma\sim\Gam(\alpha)$, then we can rewrite the generative process for the HDP (Eq.~\ref{eq:HDP}) as:
%\begin{equation}
%\begin{split}
%\zeta_j &\sim \Gam(\alpha / P),\;\;\; j=1,\dots,P\\
%D_{0j} &\sim \DP(\alpha/P),\;\;\; j=1,\dots,P\\
%\nu_m &\sim \Dir(\zeta_1,\dots,\zeta_P)\\
%D_{mj} &\sim \DP(\zeta_j,D_{0j}),\;\;\; m = 1,\dots, M,\;\; j=1,\dots,P\\
%\pi_{mi} &\sim \nu_m,\;\;\; m=1,\dots,M,\;\; i=1,\dots,N_m\\
%\theta_{mi} &\sim D_{m\pi_{mi}}\\
%x_{mi}&\sim f(\theta_{mi})\label{eq:HDPaux}
%\end{split}
%\end{equation}
\begin{equation}
\begin{alignedat}{4}
\zeta_j &\sim \Gam(\alpha / P) &\qquad \qquad &\pi_{mi}\sim& \nu_m \quad \;\;\\
D_{0j} &\sim \DP(\alpha/P, H) &\qquad \qquad &\theta_{mi} \sim& D_{m\pi_{mi}}\\
\nu_m &\sim \Dir(\zeta_1,\dots,\zeta_P) &\qquad \qquad &x_{mi}\sim& f(\theta_{mi})\\
D_{mj} &\sim \DP(\zeta_j,D_{0j}), & & &
\label{eq:HDPaux}
\end{alignedat}
\end{equation}
for $j=1,\dots,P$, $m=1,\dots,M$, and $i= 1,\dots,N_m$. \label{thm:HDP}
\end{theorem}

\begin{proof}
Let $\zeta_j \sim \Gam(\alpha / P)$ and $D_{0j}\sim \DP(\alpha/P, H)$, $j=1,\dots,P$. This implies that $G_{0j}:=\zeta_jD_{0j} \sim \GaP((\alpha / P) H)$ and $\gamma:=\sum_{j=1}^P \zeta_j \sim \Gam(\alpha)$.

By superposition of gamma processes, 
\begin{equation*}
\begin{split}
G_0 &:= \sum_{j=1}^PG_{0j} \sim \GaP(\alpha H)\\
&:= \sum_{j=1}^P\zeta_jD_{0j}
\end{split}
\end{equation*}

Normalizing $G_0$, we get
\begin{equation*}
D_0(\cdot) := \frac{G_0(\cdot)}{\gamma} = \sum_{j=1}^P\frac{\zeta_j}{\gamma}D_{0j} \sim \DP(\alpha,H)
\end{equation*}
as required by the HDP.

Now, for $m=1,\dots, M$ and $j=1,\dots,P$, let $\eta_{mj}\sim \Gamma(\zeta_j)$ and $D_{mj}\sim \DP(\zeta_j, D_{0j})$. This implies that 
\begin{equation*}
\begin{split}
G_{mj}:=\eta_{mj}D_{mj} &\sim \GaP(\zeta_j D_{0j})\\
&:= \GaP(G_{0j}).
\end{split}
\end{equation*}
Superposition of the gamma processes gives
\begin{equation*}
G_m := \sum_jG_{mj} \sim \GaP(\sum_{j=1}^P G_{0j}) := \GaP(G_{0}) = \GaP(\gamma D_{0}).
\end{equation*}

The total mass of $G_m$ is given by $\sum_{j=1}^P\eta_{mj}$, so
\begin{equation}
D_m(\cdot) = \frac{G_m(\cdot)}{\sum_{j=1}^P\eta_{mj}} \sim \DP(\gamma, D_0)\label{eq:1}
\end{equation}
as required by the HDP.

If we let $\nu_{mj} = \eta_{mj} /sum_{k=1}^P\eta_{mk}$, then we can rewrite Equation~\ref{eq:1} as
\begin{equation}
D_m(\cdot) = \sum_{j=1}^P\nu_{mj}D_{mj} \sim \DP(\gamma, D_0),
\end{equation}
where $(\nu_{m1},\dots, \nu_{mP}) \sim \Dir(\zeta_1,\dots, \zeta_P)$.
\end{proof}

\section{Metropolis Hastings acceptance probabilities}
In both cases, the proposal probabilities $q(\{\pi_i\}\rightarrow \{\pi_i^*\}) = q(\{\pi_i^*\}\rightarrow \{\pi_i\})$, so we need only consider the likelihood ratios.

\subsection{Dirichlet process}

In the Dirichlet process case, the likelihood ratio is given by:

\begin{equation}
\begin{split}
\frac{p(\{\pi_i^*\})}{p(\{\pi_i\})} &= \frac{p(\{x_i\}|\pi_i^*) p(\{\pi_i^*\}|\alpha,P)}{p(\{x_i\}|\pi_i) p(\{\pi_i\}|\alpha,P)}\\
&= \frac{p(\{z_i\}|\pi_i^*) p(\{\pi_i^*\}|\alpha,P)}{p(\{z_i\}|\pi_i) p(\{\pi_i\}|\alpha,P)}\\
&= \prod_{j=1}^{P}\prod_{i=1}^{\max(N_j,N_j^*)}\frac{a_{ij}!}{a_{ij}^*!},
\end{split}\label{eq:lhood}
\end{equation}
where $N_j$ is the number of data points on processor $j$, and $a_{ij}$ is the number of clusters of size $i$ on processor $j$.

%\swcomment{This is obtained by combining the Dirichlet compound multinomial distribution with the Ewen's sampling formula.

The probability of the processor allocations is described by the Dirichlet compound multinomial, or multivariate P\"{o}lya, distribution,
\begin{equation*}
\begin{split}
p(\{\pi_i\}|\alpha,\pi) =& \frac{N!}{\prod_{j=1}^{P}N_j!}\frac{\Gamma(\sum_{j=1}^P\alpha / P)}{\Gamma(N + \sum_{j=1}^P\alpha / P)}\\
&\prod_{j=1}^{P}\frac{\Gamma(N_j + \alpha / P)}{\Gamma(\alpha / P)}\\
=& \frac{N!}{\prod_{j=1}^{P}N_j!}\frac{\Gamma(\alpha)}{\Gamma(N + \alpha)}\prod_{j=1}^{P}\frac{\Gamma(N_j + \alpha / P)}{\Gamma(\alpha / P)},
\end{split}
\end{equation*}
where $N = \sum_{j=1}^PN_j$ is the total number of data points. So,
\begin{equation*}
\frac{p(\{\pi_i^*\}|\alpha,\pi)}{p(\{\pi_i\}|\alpha,\pi)}=\prod_{j=1}^P\frac{N_j}{N_j^*}\frac{\Gamma(N_j^*+\alpha / P)}{\Gamma(N_j + \alpha / P)}
\end{equation*}

Conditioned on the processor indicators, the probability of the data can be written
\begin{equation*}
p(\{z_i\}|\{\pi_i\}) =\prod_{j=1}^{P}p(\{n_{jk}\}|N_j),
\end{equation*}
where $n_{jk}$ is the number of data points in the $k$th on processor $j$. The distribution over cluster sizes in the Chinese restaurant process is described by Ewen's sampling formula, which gives:
\begin{equation*}
p(\{n_{jk}\}|N_j) =\bigg(\frac{\alpha}{P}\bigg)^{K_j} \frac{N_j!}{\prod_{k=1}^{K_j}n_{jk}!} \frac{\Gamma(\alpha / P)}{\Gamma(N_j + \alpha / P)}\prod_{i=1}^{N_j}\frac{1}{a_j!}
\end{equation*}
where $K_j$ is the total number of clusters on processor $j$. Therefore,
\begin{equation*}
\frac{p(\{n_{jk}^*\}|N_j)}{p(\{n_{jk}\}|N_j)} = \prod_j\frac{N_j^*!}{N_j!}\frac{\Gamma(N_j + \alpha / P)}{\Gamma(N_j^* + \alpha / P)}\prod_{1}^{\max(N_j,N_j^*)}\frac{a_{ij}!}{a_{ij}^*!},
\end{equation*}
so we get Equation~\ref{eq:lhood}.

\subsection{Hierarchical Dirichlet processes}

For the HDP, the likelihood ratio is given by
\begin{equation*}
 \frac{p(\{x_{mi}\}|\{\pi_{mi}^*\,\gamma,\boldsymbol{\xi}^*,\alpha,P)}{p(\{x_{mi}\}|\{\pi_{mi}\,\gamma,\boldsymbol{\xi},\alpha,P)}\frac{p(\{\pi_{mi}^*\}|\gamma,\boldsymbol{\xi}^*)}{p(\{\pi_{mi}\}|\gamma,\boldsymbol{\xi}^)}\frac{p(\boldsymbol{\xi}^*|\alpha,P)}{p(\boldsymbol{\xi}|\alpha,P)}
\end{equation*}

The first term in the likelihood ratio is the ratio of the joint probabilities of the topic- and table-allocations in the local HDPs. This can be obtained by applying the Ewen's sampling formula to both top- and bottom-level DPs. Let $\mathbf{t}_{j}$ be the count vector for the top-level DP on processor $j$ -- in Chinese restaurant franchise terms, $t_{jd}$ is the number of tables on processor $j$ serving dish $d$. Let $\mathbf{n}_{jm}$ be the count vector for the $m$th bottom-level DP on processor $j$ -- in Chinese restaurant franchise terms, $n_{jmk}$ is the number of customers in the $m$th restaurant sat at the $k$th table of the $j$th processor. Let $T_{mj}$ be the total number of occupied tables from the $m$th restaurant on processor $j$, and let $D_j$ be the total number of unique dishes on processor $j$. Let $a_{jmi}$ be the total number of tables in restaurant $m$ on processor $j$ with exactly $i$ customers, and $b_{ji}$ be the total number of dishes on processor $j$ served at exactly $i$ tables. We use the notation $n_{jm\cdot} = \sum_k n_{jmk}$, $T_{\cdot j} = \sum_m T_{mj}$, etc. 

\begin{equation*}
\begin{split}
&p(\{n_{jmk}\} |\gamma, \boldsymbol{\xi})\\
 =& \prod_{m=1}^{M}\prod_{j=1}^P(\gamma\xi_j)^{T_{mj}}\frac{n_{jm\cdot}!}{\prod_{k=1}^{T_{mj}}n_{jmk}!}\frac{\Gamma(\gamma\xi_j)}{\Gamma(\gamma \xi_j + n_{jm\cdot})} \prod_{i=1}^{N_j}\frac{1}{a_{jmi}!},
\end{split}
\end{equation*}
and
\begin{equation*}
\begin{split}
&p(\{t_{jd}\} | \alpha, P)\\
 =& \prod_{j=1}^P \bigg(\frac{\alpha}{P}\bigg)^{D_j}\frac{T_{\cdot j}!}{\prod_{d=1}^{D_j}t_{jd}!}\frac{\Gamma(\alpha / P)}{\Gamma(\alpha / P + T_{\cdot j})}\prod_{i=1}^{T_{\cdot j}}\frac{1}{b_{ji}},
\end{split}
\end{equation*}

so
\begin{equation}
\begin{split}
 &\frac{p(\{t_{jd}^*\},\{n_{jmk}^*\}|rest)}{p(\{t_{jd}\},\{n_{jmk}\}|rest)}\\
=& 
\prod_{j=1}^{P}\frac{(\xi_j^*)^{T_{\cdot j}^*}}{(\xi_j)^{T_{\cdot j}}}\frac{T_{\cdot j}^* !}{T_{\cdot j} !}\frac{\Gamma(\alpha / P + T_{\cdot j})}{\Gamma(\alpha / P + T_{\cdot j}^*)}\bigg(\frac{\Gamma(\gamma \xi_j^*)}{\Gamma(\gamma \xi_j)}\bigg)^M\\
\textstyle &\bigg\{ \prod_{i=1}^{\max(T_{\cdot j},T_{\cdot j}^*)}\frac{b_{ji}!}{b_{ji}^*!}\bigg\} \prod_{m=1}^{M}\frac{n_{jm\cdot}^*!}{n_{jm\cdot}!}\frac{\Gamma(\gamma\xi_j + n_{jm\cdot})}{\Gamma(\gamma\xi_j^* + n_{jm\cdot}^*)}\\
&\prod_{i=1}^{\max(N_j,N_j^*)}\frac{a_{jmi}!}{a_{jmi}^*!}.
\end{split}\label{eq:HDP1}
\end{equation}

The probability of the processor assignments is given by:
\begin{equation*}
\begin{split}
p(\{\pi_{mi}\}|\gamma,\boldsymbol{\xi}) =& \prod_{m=1}^{M}\frac{n_{\cdot m \cdot}!}{\prod_{j=1}^P n_{jm\cdot}!}\frac{\Gamma(\gamma)}{\Gamma(n_{\cdot m \cdot} + \gamma)}\\
&\prod_{j=1}^P\frac{\Gamma(\gamma \xi_j + n_{jm\cdot})}{\Gamma(\gamma \xi_j)},
\end{split}
\end{equation*}
so the second term is given by
\begin{equation}
\begin{split}
\frac{p(\{\pi_{mi}^*\}|\gamma,\boldsymbol{\xi})}{p(\{\pi_{mi}\}|\gamma,\boldsymbol{\xi})} =&  \prod_{j=1}^P \bigg(\frac{\Gamma(\gamma \xi_j)}{\Gamma(\gamma \xi_j^*)}\bigg)^M \\
&\prod_{m=1}^M\frac{n_{jm\cdot}!}{n_{jm\cdot}^*!}\frac{
\Gamma(\gamma \xi_j^* + n_{jm\cdot}^*)}{
\Gamma(\gamma \xi_j + n_{jm\cdot})}\label{eq:HDP2}
\end{split}
\end{equation}

The third term is given by
\begin{equation}
 \frac{p(\boldsymbol{\xi}^*|\alpha,P)}{p(\boldsymbol{\xi}|\alpha,P)} = \prod_{j=1}^P \bigg(\frac{\xi_j^*}{\xi_j}\bigg)^{\frac{\alpha}{P}}.\label{eq:HDP3}
\end{equation}

Combining Equations~\ref{eq:HDP1}, \ref{eq:HDP2} and \ref{eq:HDP3} gives an acceptance probability of $\min(1,r)$, where
\begin{equation}
r =  \prod_{j=1}^P\frac{(\xi_j^*)^{T_{\cdot j}^* + \alpha / P}}{(\xi_j)^{T_{\cdot j} + \alpha / P}}\frac{T_{\cdot j}^*!}{T_{\cdot j}!}\frac{\Gamma(\alpha / P +T_{\cdot j})}{\Gamma(\alpha / P +T_{\cdot j}^*)}\prod_{i=1}^{n_{\cdots}}\frac{b_{ji}!}{b_{ji}^*!}\prod_{m=1}^M\frac{a_{jmi}!}{a_{jmi}^*!}.
\end{equation}
\subsection{Sampling $\gamma$}
We sample the HDP parameter $\gamma$ using reversible random walk Metropolis Hastings steps, giving an acceptance probability of
\begin{align*}
\min\bigg(1,{\left ( \frac{\gamma^*}{\gamma} \right )}^{T_{..}} 
{\left [ \frac{\Gamma(\gamma^*)}{\Gamma(\gamma)} \right ]}^M
\prod_{m=1}^M \frac{\Gamma(n_{.m.} + \gamma)}{\Gamma(n_{.m.} + \gamma^*)}
\bigg).
\end{align*}
\bibliography{pDP}
\bibliographystyle{icml2013}